\documentclass{article}

\usepackage{amsmath,amsfonts,bm}

\def\eqref#1{equation~\ref{#1}}

\def\1{\bm{1}}

\def\vb{{\bm{b}}}

\def\ve{{\bm{e}}}

\def\vv{{\bm{v}}}

\def\vx{{\bm{x}}}

\def\mI{{\bm{I}}}

\def\mX{{\bm{X}}}
\def\mY{{\bm{Y}}}
\def\mZ{{\bm{Z}}}

\DeclareMathAlphabet{\mathsfit}{\encodingdefault}{\sfdefault}{m}{sl}
\SetMathAlphabet{\mathsfit}{bold}{\encodingdefault}{\sfdefault}{bx}{n}

\def\sA{{\mathbb{A}}}

\usepackage{microtype}
\usepackage{graphicx}
\usepackage{subcaption}
\usepackage{booktabs} 

\usepackage{hyperref}

\usepackage[preprint]{icml2026}

\usepackage{amsmath}
\usepackage{amssymb}
\usepackage{mathtools}
\usepackage{amsthm}

\usepackage[capitalize,noabbrev]{cleveref}

\theoremstyle{plain}
\newtheorem{theorem}{Theorem}[section]

\theoremstyle{definition}

\newtheorem{assumption}[theorem]{Assumption}
\theoremstyle{remark}

\usepackage[textsize=tiny]{todonotes}

\usepackage{xspace}
\usepackage{enumitem}
\usepackage{tcolorbox}
\usepackage{wrapfig}
\usepackage{booktabs, makecell, multirow, tabularx}
\usepackage{colortbl}
\usepackage{stfloats}

\newcommand{\op}[1]{\operatorname{#1}}
\newcommand{\themodel}{{COIN}\xspace}
\newcommand{\conr}{{Context Reliance}\xspace}

\icmltitlerunning{Uncovering Context Reliance in Unstructured Knowledge Editing}

\begin{document}

\twocolumn[
  \icmltitle{Uncovering Context Reliance in Unstructured Knowledge Editing}



  \icmlsetsymbol{equal}{*}

  \begin{icmlauthorlist}
    \icmlauthor{Zisheng Zhou}{equal,1}
    \icmlauthor{Mengqi Zhang}{equal,1}
    \icmlauthor{Shiguang Wu}{1}
    \icmlauthor{Xiaotian Ye}{2}
    \icmlauthor{Chi Zhang}{1}
    \icmlauthor{Zhumin Chen}{1}
    \icmlauthor{Pengjie Ren}{1}
  \end{icmlauthorlist}

  \icmlaffiliation{1}{Shandong University}
  \icmlaffiliation{2}{Beijing University of Posts and Telecommunications}

  \icmlcorrespondingauthor{Pengjie Ren}{renpengjie@sdu.edu.cn}

  \icmlkeywords{Machine Learning, Large Language Model, Knowledge Editing}

  \vskip 0.3in
]



\printAffiliationsAndNotice{\icmlEqualContribution}  

\begin{abstract}
Editing Large language models (LLMs) with real-world, unstructured knowledge is essential for correcting and updating their internal parametric knowledge.
In this work, we revisit the fundamental next-token prediction (NTP) as a candidate paradigm for unstructured editing.
We identify \textbf{\conr} as a critical failure mode of NTP-based approaches, where knowledge acquired from edited text becomes highly dependent on its preceding context, leading to recall failures when that context is absent during inference.
This hypothesis is supported by our \textit{empirical} validation that prepending context during inference recovers knowledge recall.
We further \textit{theoretically} demonstrate that \conr is an inherent consequence of gradient-based optimization, which tends to bind acquired knowledge to a specific aggregated contextual representation.
To address this, we propose a simple yet effective COntext-INdependent editing framework (\themodel), encouraging model to focus on knowledge within local scope rather than memorizing contextual patterns.
Evaluations show that \themodel reduces \conr by 45.2\% and outperforms strong baselines by 23.6\% in editing success rate, highlighting the vital role of mitigating \conr for robust editing.
\end{abstract}

\section{Introduction}

\begin{figure}
  \centering
  \includegraphics[width=1\columnwidth]{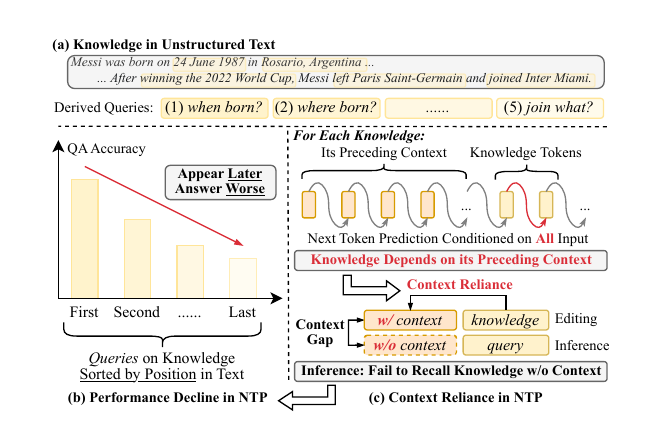}
  \vspace{-1.7em}
  \caption{(a) Unstructured text containing multiple pieces of knowledge. (b) Edited model tends to recall knowledge appearing later in the text less accurately. (c) Knowledge acquired from edited text is overly dependent on its preceding context, and omitting that context during inference leads to failed recall.}
  \label{fig:intro}
  \vspace{-1.3em}
\end{figure}

Large language models (LLMs) acquire extensive world knowledge during pre-training by next-token prediction (NTP) \citep{llmSurvey,unkeFT}, yet this knowledge is static and may contain factual errors or become outdated over time \citep{de2021editing}.
Knowledge editing \citep{KnowledgeEditing} has emerged as a promising solution, enabling efficient and precise modifications to specific knowledge within LLMs.
Existing work \citep{rome,glame,alphaedit} primarily focuses on structured knowledge editing, where knowledge is represented as (\textit{subject}, \textit{relation}, \textit{object}) triplets.
In contrast, the vast majority of knowledge in the real-world is expressed in unstructured text \citep{unstructuredKnowledge}, characterized by richer information as shown in Figure \ref{fig:intro} (a).
Despite this, most existing unstructured knowledge editing methods \citep{selfedit,unke,anyedit} largely adapt strategies originally designed for structured knowledge, limiting their ability to faithful editing.

Given that NTP is the fundamental paradigm through which LLMs naturally acquire knowledge during pre-training, it appears to be a theoretically suitable paradigm for unstructured knowledge editing \citep{unkeFT}.
However, its reliability in this specific task remains to be examined.
To investigate, we conduct an empirical study on several benchmarks (\S \ref{sec:exp_setup}).
Our analysis reveals a critical observation: after editing unstructured text containing multiple pieces of knowledge, the model tends to \textit{recall knowledge appearing later in the text less accurately}, as shown in Figure \ref{fig:intro} (b).
To explain this, we propose \textbf{\conr, where the knowledge acquired from the edited text is highly dependent on its preceding context} (\S \ref{sec:uncover_context_reliance}).
Consequently, when the model is queried without that specific context during inference, this context gap makes it difficult to accurately recall the relevant knowledge, as illustrated in Figure \ref{fig:intro} (c).
As the context length grows for later-appearing knowledge, this gap widens, resulting in the observed performance decline.
To illustrate this, consider a scenario where the model learns the fact ``\textit{Messi won the 2022 World Cup.}'' from a text that also includes the preceding context ``\textit{Messi was born on 24 June 1987 \dots},'' it may associate this fact with the context.
If this context is omitted when asking ``\textit{What did Messi win in 2022?}'', the model may fail to recall the fact.

We validate the \conr hypothesis through both empirical evidence and theoretical analysis.
\textit{Empirically}, we observe that prepending the original context during inference significantly recovers knowledge recall, particularly for facts appearing later in the text (Figure \ref{fig:append_prefix}).
This confirms that the edited model heavily relies on the preceding context as a necessary cue for accessing the learned knowledge.
\textit{Theoretically}, we provide a rigorous grounding for these observations by demonstrating that \conr is an inherent consequence of gradient-based optimization.
Theorem \ref{thm:main-context-dep} indicates that gradient updates tend to entangle target knowledge with a specific aggregated contextual representation, rather than exclusively with semantically relevant tokens.
Taken together, these findings reveal that while knowledge is successfully encoded, \conr renders it accessible only under specific textual conditions, highlighting this phenomenon as a fundamental limitation of standard NTP-based editing.

To mitigate \conr, we propose a simple yet effective COntext-INdependent unstructured knowledge editing framework (\themodel), designed to decouple knowledge association from its context.
Specifically, \themodel introduces the Context Alignment Loss, which enforces prediction consistency across varying context lengths, thereby forcing the model to concentrate on intrinsic knowledge within the local scope rather than memorizing contextual patterns.
Additionally, to prevent model collapse \citep{model_collapse}, we propose the Knowledge Consistency Loss to preserve the model’s capabilities on unrelated knowledge.
We conduct comprehensive evaluation utilizing Llama3-8B \citep{llama3} and Qwen2.5-7B \citep{qwen2.5} on AKEW the \citep{akew} and UnKEBench \citep{unke} benchmarks.
Experimental results confirm that \themodel effectively mitigates \conr by 45.2\%.
By alleviating this, the framework achieves a substantial 25.6\% improvement in editing success rate for unstructured tasks.
Furthermore, in structured knowledge scenarios, \themodel demonstrates superior generalization, outperforming all baselines in multi-hop reasoning tasks.
These findings underscore that mitigating \conr is pivotal for achieving the robustness and generalization required in real-world knowledge editing.

We summarize our contributions as follows:
\vspace{-0.8em}
\begin{itemize}[leftmargin=*]
  \setlength{\itemsep}{1.2pt}
    \item We identify \conr as a critical failure mode of NTP-based unstructured knowledge editing. Our key observation reveals that models tend to couple learned knowledge with its preceding context, leading to recall failures when such context is absent during inference.
    \item We validate \conr by empirical and theoretical analysis, and demonstrate that it is an inherent consequence of gradient-based optimization, driving models to rely on aggregated contextual representations.
    \item We propose \themodel, a simple yet effective framework for mitigating \conr. Extensive experiments demonstrate that \themodel reduces \conr by 45.2\% and improves the editing success rate by 25.6\%.
\end{itemize}

\section{Related Work}

\textbf{Structured Knowledge Editing} targets knowledge represented as well-defined (\textit{subject}, \textit{relation}, \textit{object}) triplets. These methods can be primarily classified into three categories: external memorization-based \citep{grace,mello}, meta-learning-based \citep{mend,malmen}, and locate-then-edit methods \citep{rome,alphaedit}.

\textbf{Unstructured Knowledge Editing} is designed for more realistic scenarios where knowledge involves complex contexts and nuanced semantic relationships within texts. These methods are typically categorized into two groups:
(1) \textit{Triplets decomposition-based methods} convert complex unstructured texts into triplets, subsequently applying standard structured knowledge editing techniques. For instance, \citet{selfedit} leverage the eventual context to decompose text into a series of sub-questions and corresponding answers to perform editing.
(2) \textit{Query construction-based methods} focus on constructing specific query for target text, editing the model based on the resulting QA pair. UnKE \citep{unke} aggregates contextual information from input query across multiple layers to inject knowledge into specific MLP modules. AnyEdit \citep{anyedit} segments unstructured text into blocks for iterative editing based on generated query. $\mu$KE \citep{muke} further enhances the process with a Matryoshka-style memory update mechanism and adaptive loss coefficients. Detailed related work is provided in Appendix \ref{appd:related_work}.

Despite these advancements, current approaches predominantly rely on transforming raw text into intermediate formats (e.g., triplets or QA pairs). This transformation process often oversimplifies complex semantics, leading to the loss of critical information.
In contrast, as the fundamental paradigm for native knowledge acquisition in LLMs, NTP paradigm directly utilizes raw text, theoretically avoiding such information loss \citep{triple_loss}. However, the application and behavior of the NTP paradigm in unstructured knowledge editing remain underexplored. To address this gap, we conduct a comprehensive investigation into unstructured knowledge editing based on NTP paradigm.

\begin{figure*}[ht]
  \centering
  \includegraphics[width=0.9\linewidth]{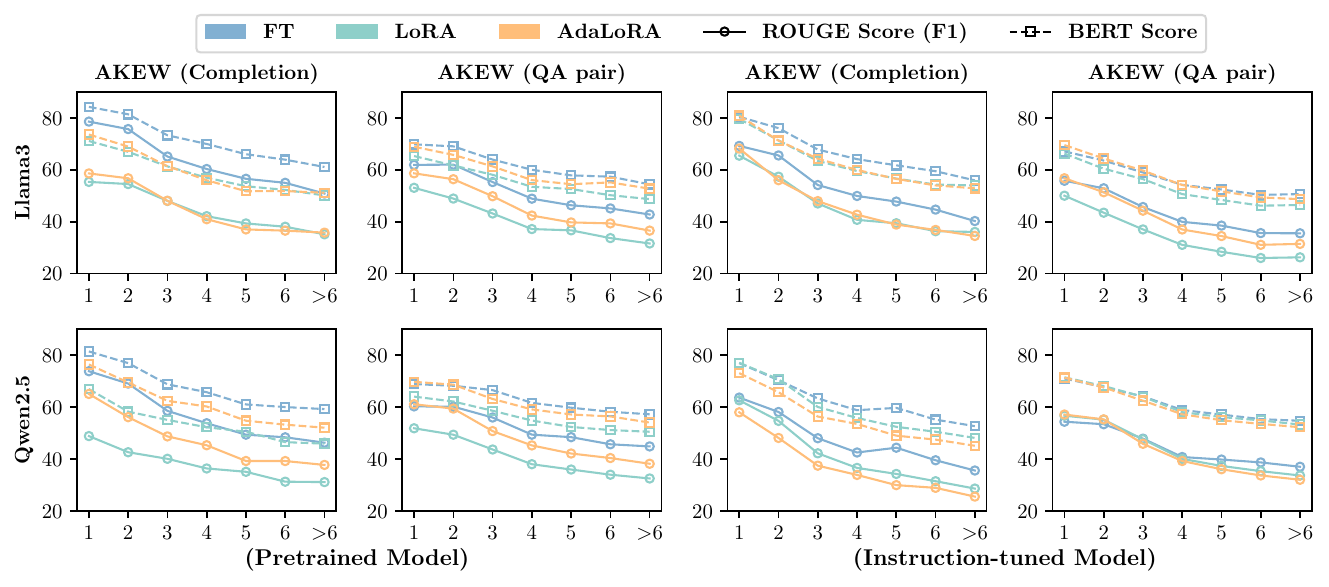}
  \vspace{-0.6em}
  \caption{Performance decline of different NTP-based methods on AKEW dataset as the position of knowledge moves later in the text. The x-axis represents the sequential order of knowledge within the original text, where position `1' corresponds to questions targeting knowledge positioned at the beginning, and position `$>$6' corresponds to knowledge located towards the end. The y-axis represents the corresponding accuracy.}
  \vspace{-0.7em}
  \label{fig:context_reliance}
\end{figure*}

\section{Uncovering \conr in NTP}\label{sec:analysis}

In this section, we first examine the NTP-based editing paradigm and observe that its effectiveness diminishes for knowledge appearing later in text. We attribute this phenomenon to \conr, the dependence of acquired knowledge on its preceding context, and provide both empirical and theoretical analysis. Finally, we demonstrate the insufficiency of existing techniques in addressing this issue.

\subsection{Experimental Setup}\label{sec:exp_setup}
\textbf{Models \& Methods.} We conduct our experiments using the pretrained and instruction-tuned versions of Llama3-8B and Qwen2.5-7B models.
We evaluate three training methods, including Fine-Tuning (FT) \citep{ft}, LoRA \citep{lora}, and AdaLoRA \citep{adalora}. All three methods employ NTP paradigm for training on the full unstructured text.
FT optimizes specific MLP module, while LoRA and AdaLoRA introduce low-rank adapters to the weight matrices of each MLP layer. Further implementation details are provided in Appendix \ref{appd:baselines}.

\textbf{Dataset.} To evaluate the performance of the NTP paradigm in editing unstructured knowledge, we utilize the AKEW dataset \citep{akew}.
Each instance in AKEW contains an unstructured text and several questions formatted as text completion tasks, where each question targets a specific piece of knowledge found within that text.
These questions are \textit{ordered sequentially} based on the location of the target knowledge in the unstructured text.
We apply the NTP editing method directly to the original unstructured text without any pre-processing.
To investigate the impact of query format, we also convert the original text completion tasks into a standard QA format.
Further details about the AKEW dataset are available in Appendix \ref{appd:akew_details}.

\textbf{Metrics.} Following prior works \citep{unke,anyedit}, we adopt BERT and ROUGE Scores as our primary metrics for a comprehensive evaluation. We employ BERT Score to evaluate the semantic similarity between the model-generated output and the ground truth, by computing the cosine similarity of their contextual word embeddings.
Additionally, we use ROUGE Scores for lexical evaluation, calculating the precision and recall of ROUGE-L (Longest Common Subsequence) and reporting the F1 score.
Notably, unlike \citet{unke} and \cite{anyedit}, we measure performance by questioning specific knowledge within the text, rather than by reproducing the entire text.

\subsection{Empirical Analysis: \conr Impairs Knowledge Recall}\label{sec:uncover_context_reliance}
\textbf{Observation: Edited models exhibit significant recall degradation for knowledge positioned later in the text.}
Figure \ref{fig:context_reliance} illustrates how the accuracy declines for questions that target knowledge located in different positions in the input text.
Specifically, for the knowledge appears later in the text, we observe a sharp decrease in model performance, with BERT and ROUGE scores dropping by up to 38.3\% and 56.0\%, respectively.
This degradation is consistent across various NTP-based methods, models, and query formats, indicating a systemic vulnerability in current editing paradigms.
Our experiments on the UnKEBench \cite{unke} dataset, as shown in Figure \ref{fig:unke_context_reliance}, further strongly support this finding.

\textbf{Hypothesis: NTP objectives induce \conr, making edited knowledge heavily dependent on its preceding context.}
We attribute the observed performance decline to \conr.
Specifically, during training, the model learns knowledge by \textit{maximizing the probability of predicting the next token conditioned on the entire preceding input sequence}, causing the model to deeply associate the acquired knowledge with this specific context.
However, during inference, input queries typically lack this preceding information, preventing the model from effectively recalling the edited knowledge.
This context gap intensifies for knowledge located later in the text, as the longer preceding context exerts a stronger influence, thereby explaining the observed positional degradation.

\begin{figure}[ht]
\centering
\vspace{-0.5em}
\includegraphics[width=0.9\linewidth]{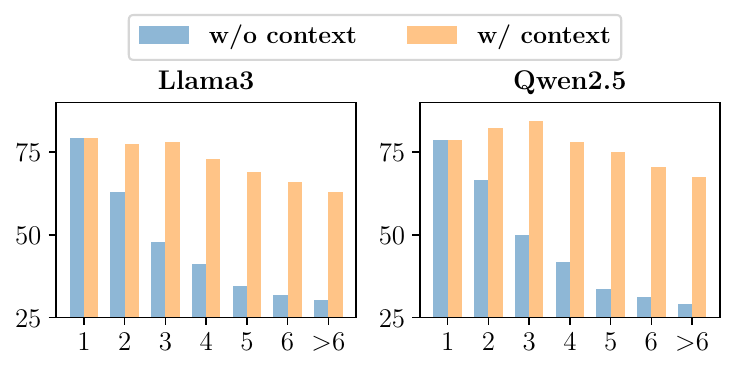}
\vspace{-0.5em}
\caption{Probability of predicting answer with and without preceding context. The results show that appending context increases probability of answer.}
\vspace{-0.3em}
\label{fig:append_prefix}
\end{figure}

\textbf{Validation: Prepending context during inference restores knowledge recall, confirming our \conr hypothesis.}
If \conr is the cause of the recall failure, then restoring the original preceding context at inference time should recover the model’s ability to answer correctly.
To verify this, we compare the probability of correct answer on standard queries with queries prepended by the original context from the editing text.
Importantly, we ensure that the correct answers do not appear in the prepended context to prevent trivial copying.
As shown in Figure \ref{fig:append_prefix}, when no context is provided, the probability of correctly predicting the answer drops significantly for knowledge appearing later in the text.
However, once the preceding context is restored, this probability is almost recovered.
This result shows that the edited knowledge indeed depends on the presence of the preceding context, providing strong evidence for the \conr hypothesis.

\subsection{Theoretical Analysis: Gradient-Based Optimization Induces \conr}\label{sec:theoretical_validation}
We provide a theoretical analysis explaining why the NTP paradigm can naturally induce \conr.
Our key observation is that \textbf{gradient-based optimization tends to bind newly acquired knowledge to the aggregated contextual representation produced during editing}, rather than associating it exclusively with the semantically relevant tokens.
As a result, the learned behavior may depend on auxiliary context cues that are present during training but absent at inference.
We formalize this intuition and show how gradient descent can induce such \conr.

\begin{assumption}[Informal]
\label{a:settings}
Consider a reparameterized transformer model under the setting of~\citet{tian2023scan}.
The model is trained via gradient descent on a single sample $(x_1,\dots,x_T,x_{T+1})$, where $x_i\in[M]$ and $M$ is the vocabulary size.
The training objective is the log-likelihood of predicting $x_{T+1}$ given the prefix $x_{\le T}$.
For analytical convenience while retaining the core mechanism of interest, we assume there is exactly one relevant and context token, denoted by $p$ and $q$, respectively, which together receive almost all attention from the query token $x_T$.
\end{assumption}

\begin{theorem}[\conr Induced by Gradient Descent]
\label{thm:main-context-dep}
Under the setting of Assumption~\ref{a:settings}, after one gradient update, the model exhibits the following behavior during inference, where $\op{logit}\in\mathbb{R}^M$ denotes the prediction logits.

\textbf{Success with context.}
If both contextual tokens $p$ and $q$ are present, then
\begin{equation}
    \arg\max_{k\in[M]} \op{logit}_k = x_{T+1}.
\end{equation}
\textbf{Failure without context.}
If the contextual token $q$ is absent, then
\begin{equation}
    \op{logit}_{x_{T+1}} < \max_{k\in\sA} \op{logit}_{k},
\end{equation}
where $\sA$ denotes the set of tokens associated with $p$ prior to training, and $x_{T+1}\notin\sA$.
\end{theorem}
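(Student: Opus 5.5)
I will work directly in the Scan-and-Snap reparameterization of \citet{tian2023scan}. There the output logits are $\op{logit} = Y^\top \mathrm{LN}(\vv)$ (up to layer normalization), where $\vv = \sum_{t} b_{tT}\, \ve_{x_t}$ is the attention-weighted aggregation of one-hot context embeddings, and $Y\in\mathbb{R}^{M\times M}$ is the decoder matrix. By Assumption~\ref{a:settings}, essentially all attention mass from $x_T$ falls on $p$ and $q$, so during training $\vv \approx \alpha\,\ve_p + (1-\alpha)\,\ve_q$ for some $\alpha\in(0,1)$ fixed at initialization. I will encode the prior knowledge ``$p$ is associated with $\sA$'' as a pre-training decoder $Y_0$ whose column for $p$ puts margin $c>0$ on tokens in $\sA$, and roughly zero on $x_{T+1}$ and elsewhere. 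For instance, $Y_0^\top \ve_p = c\,\1_{\sA}$ up to centering. The embedding of $q$ is taken to be neutral under $Y_0$.

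\textbf{Step 1: the gradient.} I will compute the gradient of the log-likelihood $\log \mathrm{softmax}(Y^\top \mathrm{LN}(\vv))_{x_{T+1}}$ with respect to $Y$. The standard cross-entropy calculation gives
\[
\Delta Y = \eta\, \mathrm{LN}(\vv)\,(\ve_{x_{T+1}} - \vb)^\top,
\]
where $\vb$ is the current softmax output. So after one step the new association is a rank-one term tied to the \emph{aggregated} direction $\mathrm{LN}(\vv)\propto \alpha\ve_p+(1-\alpha)\ve_q$, not to $\ve_p$ alone. This is the formal content of binding the fact to the aggregate context. I will also argue that the attention (Z) update is of lower order in one step, or simply freeze it as in Tian et al.'s timescale separation.

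\textbf{Step 2: inference with context.} With both $p,q$ present, $\vv$ equals the training aggregate $\vv_{\rm tr}$. The logit change is then $\eta\|\mathrm{LN}(\vv_{\rm tr})\|^2(\ve_{x_{T+1}}-\vb)$, which adds $\eta\|\cdot\|^2(1-b_{x_{T+1}})$ to the target and subtracts $\eta\|\cdot\|^2 b_k$ from the others. Choosing the step size $\eta$ so that this gain exceeds the prior margin $\alpha c$ gives $\arg\max = x_{T+1}$.

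\textbf{Step 3: inference without $q$, the main obstacle.} Now $\vv=\ve_p$ (renormalized attention), so $\mathrm{LN}(\vv)$ has inner product with $\mathrm{LN}(\vv_{\rm tr})$ equal only to a cosine of about $\alpha/\sqrt{\alpha^2+(1-\alpha)^2}$. The target's gain is scaled down by this factor, while the prior margin $c$ on $\sA$ enters at full strength. The delicate part is exhibiting a nonempty window of $\eta$ in which
\[
\eta\|\mathrm{LN}(\vv_{\rm tr})\|^2 \gtrsim \alpha c \qquad\text{yet}\qquad \eta\,\alpha\,\|\mathrm{LN}(\vv_{\rm tr})\| \lesssim c,
\]
with the $\vb$ corrections controlled. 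I will handle this by making LayerNorm explicit, so that norms are $1$ and the cosine is $\rho=\alpha/\sqrt{\alpha^2+(1-\alpha)^2}<1$. The success condition in Step 2 compares the gain against the prior margin at its full aggregate weight, while the failure condition in Step 3 compares it against $c$ with the gain scaled by $\rho$. The window is nonempty whenever $q$ carries non-negligible attention, i.e.\ $1-\alpha$ is bounded below. I will bound the softmax terms $b_k$ using the assumption that $b_{x_{T+1}}$ is small initially and that $\sA$ holds bounded mass. If the window fails under this accounting, the fallback is to let the prior association be weighted by how the pretrained model scored $p$ alone, making the mismatch a pure factor of $\rho$.
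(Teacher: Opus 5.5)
Your proposal is correct and is essentially the paper's proof: one rank-one update of $\mY$ along the aggregated direction $\op{LN}(\mX^\top\vb)$, an attention change of only $O(1/M)$, and a comparison of logits with and without $q$. Once LayerNorm is made explicit (so the success threshold is $c\rho$, not the $\alpha c$ in your display), this gives the window $c\rho<\eta<c/\rho$, which is exactly the paper's $1/(1+\delta^2)<\eta<1$ after setting $c=r=1/\sqrt{1+\delta^2}=\rho$.

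The only real difference is that you make $q$ neutral under $\mY_0$, while the paper gives $q$ its own association set $\sA_q$. Your choice avoids a check that the paper's Step~4 skips. With $\sA_q$ present, those tokens carry logit $\sim\delta/(1+\delta^2)$ when both $p$ and $q$ are present, so for $\delta>1$ success actually requires $\eta>\delta/(1+\delta^2)$, not merely $\eta>1/(1+\delta^2)$.
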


The mechanism follows from how gradient-based optimization interacts with contextual representations in transformers.
The prediction head operates on an attention-weighted aggregation of contextual tokens, which during training is dominated by the joint contribution of $p$ and $q$.
As a result, the gradient update reinforces the association between the target token $x_{T+1}$ and this aggregated representation, rather than with the relevant token $p$ alone.
The model therefore predicts the target correctly only when the same contextual combination is present.
When the context token $q$ is removed at inference time, the aggregated representation shifts substantially, causing a mismatch with the learned update direction.
The model then reverts to its prior behavior, dominated by the pre-existing associations in $\sA$, and fails to recall the newly learned knowledge.
This explains why the edited knowledge appears context-dependent.
The complete technical statement and proof are given in Appendix~\ref{app:contextual-dependency}.

\subsection{Intuitive Mitigation Techniques}

To further investigate how intuitive strategies influence \conr, we conduct additional experiments analyzing existing techniques, including knowledge splitting and paraphrasing.
Further analysis on the impact of model scale and training steps on \conr phenomenon is provided in Appendix \ref{appd:model_scale} and \ref{appd:training_steps}.

\begin{figure}[ht]
\centering
\vspace{-0.5em}
\includegraphics[width=0.85\linewidth]{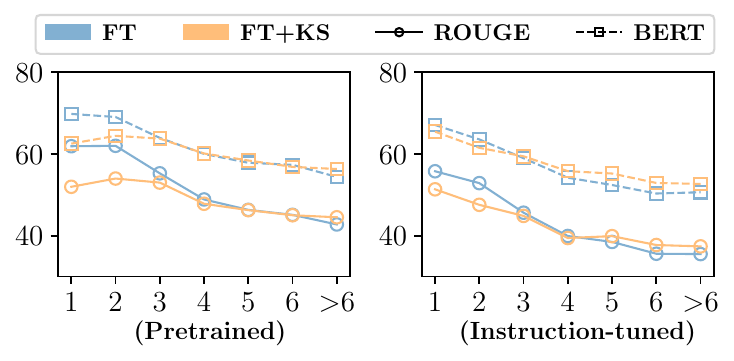}
\vspace{-0.5em}
\caption{Performance comparison between with and without knowledge splitting (KS).}
\vspace{-0.3em}
\label{fig:knowledge_splitting}
\end{figure}

\textbf{Mitigation Technique 1: Knowledge Splitting.}
Given that \conr arises from the strong correlation between knowledge and the preceding context, a natural mitigation strategy is to segment unstructured texts into individual sentences for training.
As shown in Figure \ref{fig:knowledge_splitting}, while this strategy slightly improves accuracy for later knowledge, it significantly degrades performance for knowledge at the beginning, and the overall decreasing trend remains.
We believe this happens because the simple splitting, while effective at breaking reliance on contexts, destroys critical logical connections between sentences.
For example, it can make model unable to figure out what ``he'' or ``it'' refers to in the text.
As a result, the model learns from incomplete or ambiguous information.

\begin{figure}[ht]
\centering
\vspace{-0.5em}
\includegraphics[width=0.85\linewidth]{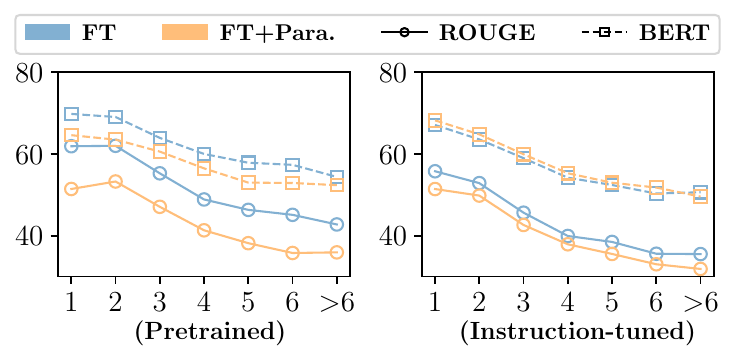}
\vspace{-0.5em}
\caption{Performance comparison between with and without paraphrasing (Para.).}
\vspace{-0.3em}
\label{fig:paraphrasing}
\end{figure}

\textbf{Mitigation Technique 2: Paraphrasing.}
Paraphrasing is a classic data augmentation technique, and we apply it to force model learning from diverse phrasings rather than memorizing fixed patterns.
To do this, we generate multiple versions of the source texts that are semantically identical but syntactically and stylistically varied.
However, results in Figure \ref{fig:paraphrasing} show that paraphrasing not only hurt knowledge acquisition across all positions but also failed to mitigate decline.
We attribute this failure to two potential factors.
First, the paraphrasing process may have introduced extra noise that interfered with the model's learning process.
Second, while the syntax varies, the preserved semantic consistency may have inadvertently reinforced the model's reliance on contextual cues.

The experiments above indicate that existing strategies cannot effectively mitigate the \conr issue in the NTP paradigm.
Therefore, developing new methods to address this problem is critical for advancing unstructured knowledge editing.

\section{Context-Independent Knowledge Editing}

\begin{figure}[ht]
\centering
\vspace{-0.5em}
\includegraphics[width=\linewidth]{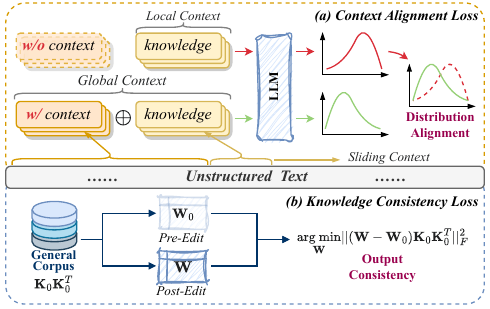}
\vspace{-0.7em}
\caption{The framework of \themodel. (a) Illustrates the Context Alignment Loss, which mitigates \conr by aligning distributions conditioned on global and local contexts. (b) Illustrates the Knowledge Consistency Loss, which preserves the general abilities of edited model.}
\label{fig:method}
\end{figure}

Our analysis in Section \ref{sec:analysis} reveals that NTP-based editing approaches suffer from a severe \conr problem.
To address this, we propose \themodel (Figure \ref{fig:method}), a simple yet effective framework designed to encourage the model to internalize knowledge rather than merely memorizing patterns with specific context.
\themodel introduces (1) \textit{Context Alignment Loss}, which is designed to break the reliance of knowledge on its preceding contexts, and (2) \textit{Knowledge Consistency Loss}, which preserves the model's general behavior to prevent model collapse.

\subsection{Context Alignment Loss}

Let $X = \{x_1, x_2, \dots, x_T\}$ denote the input token sequence to be edited.
The standard training objective optimizes the edited model parameters $\theta$ by minimizing the negative log-likelihood (NLL) of the next token $x_{t+1}$ given the global context $C_{global}^{(t)} = \{x_1, \dots, x_t\}$:
\begin{equation}
    \mathcal{L}_{\text{NLL}} = - \frac{1}{T-1} \sum_{t=1}^{T-1} \log P(x_{t+1} | C_{global}^{(t)}; \theta).
    \label{eq:nll_loss}
\end{equation}

\textbf{Principle: Mitigating the context gap between editing and inference.}
Standard NTP-based editing methods optimize Equation (\ref{eq:nll_loss}) by conditioning on all preceding tokens.
As revealed by \conr, this objective creates a critical context gap with practical inference scenarios, where the model is often queried with only partial context, leading to recall failures.
We argue that the true internalization of knowledge should be robust to such context variations.
Specifically, once knowledge is properly internalized, the model’s predictive distribution should remain consistent whether it is conditioned on the full editing context (global context) or only on a minimal query (local context).

Based on this principle, we design the Context Alignment Loss $\mathcal{L}_{\text{align}}$ to bridge the gap between predictions in global and local contexts.
Specifically, we define the local context as a sliding window of the last $k$ tokens ending at the $t$-th token: $C_{local}^{(t)} = \{x_{\max(1, t-k+1)}, \dots, x_t\}$.
We then employ the Kullback-Leibler (KL) divergence to align the distribution of the local context to that of the global context:
\begin{equation}
    \mathcal{L}_{\text{align}} = \frac{1}{T-1} \sum_{t=1}^{T-1} \text{KL}\Big(P( \cdot | C_{global}^{(t)}; \theta) \Big\| P(\cdot | C_{local}^{(t)}; \theta)\Big).
    \nonumber
\end{equation}
By minimizing $\mathcal{L}_{\text{align}}$, we force the model to capture the essential information required for prediction within the recent $k$ tokens.
This regularization discourages the model from relying on distant preceding context and promotes the robust internalization of knowledge, thereby enhancing generalization across diverse and shorter query scenarios.

\subsection{Knowledge Consistency Loss}

Effective knowledge editing requires injecting new facts while preserving the model’s performance on unrelated knowledge, thereby avoiding model collapse \citep{model_collapse}.
To achieve this, we introduce the Knowledge Consistency Loss $\mathcal{L}_{\text{cons}}$.
Unlike $\mathcal{L}_{\text{align}}$, which operates in the output distribution space, $\mathcal{L}_{\text{cons}}$ constrains the edit directly in the internal representation space, ensuring that the model’s behavior on inputs unrelated to the edited knowledge remains unchanged before and after editing.

Motivated by the view that MLP layers implement linear associative memories \citep{rome}, we regard the target layer as encoding key-value associations.
Our goal is to preserve the layer outputs (values) for inputs (keys) derived from general knowledge.
Let $\mathbf{W}_0$ and $\mathbf{W}$ denote the weight matrices of the target module before and after editing.
For a set of key vectors $\mathbf{K}_0$ (up to 100,000 samples) collected from a large general corpus such as Wikipedia, we aim to enforce: $\mathbf{W}\mathbf{K}_0 \approx \mathbf{W}_0\mathbf{K}_0$.

However, directly minimizing $\|\mathbf{W}\mathbf{K}_0 - \mathbf{W}_0\mathbf{K}_0\|_F^2$ would require forwarding massive dataset during editing.
Instead, we pre-compute the uncentered second-moment matrix $\mathbf{K}_0\mathbf{K}_0^T$ offline and obtain an efficient equivalent objective:
\begin{equation}
\mathcal{L}_{\text{cons}} = \big\| (\mathbf{W} - \mathbf{W}_0) \mathbf{K}_0\mathbf{K}_0^T \big\|_F^2.
\label{eq:locality}
\end{equation}
Using $\mathbf{K}_0\mathbf{K}_0^T$ as a proxy for the distribution of general activations, this loss heavily penalizes weight changes along frequently activated directions in the general corpus, thereby preventing the edit from interfering with existing knowledge while incorporating new information.

The complete objective function for \themodel is a weighted sum of the three components:
\begin{equation}
    \mathcal{L}_{\text{\themodel}} = \mathcal{L}_{\text{NLL}} + \alpha \mathcal{L}_{\text{align}} + \beta \mathcal{L}_{\text{cons}},
    \label{eq:final_loss}
\end{equation}
where $\alpha$ and $\beta$ control the trade-off between standard language modeling, context alignment, and knowledge preservation.
This objective enables \themodel to efficiently inject new knowledge in a context-independent manner while maintaining the model's original general abilities.

\section{Experiments}

\begin{table*}[!t]
\setlength\tabcolsep{4.25pt}
\caption{Performance comparison of all methods on AKEW and UnKEBench using instruct-tuned models, evaluated in terms of BERT Score (\%) and Precision, Recall, F1 for ROUGE-L (\%). The best results are in \textbf{bold}, and the second best are \underline{underlined}. \textbf{Notably}, we evaluate edited model using questions derived from editing text, more challenging and practical than repeating text.}
\vspace{-0.2em}
\begin{center}
\begin{small}
\resizebox{0.9\linewidth}{!}{\begin{tabular}{cl|cccc|cccc|cccc}
\toprule
\multirow{2}{*}{\textbf{Model}} & \multirow{2}{*}{\textbf{Method}} & \multicolumn{4}{c}{\textbf{AKEW-Com.}} 
& \multicolumn{4}{c}{\textbf{AKEW-QA}} & \multicolumn{4}{c}{\textbf{UnKEBench}} \\
& & Prec. & Rec. & F1 & BERT & Prec. & Rec. & F1 & BERT & Prec. & Rec. & F1 & BERT \\
\midrule

\multirow{8}{*}{\rotatebox{90}{Llama3}}
& BASE
	& 21.61 & 16.41 & 17.58 & 41.64	
	& 35.44 & 42.47 & 36.67 & 50.61
	& 17.01 & 22.91 & 17.49 & 40.28 \\

& AnyEdit
	& 23.68 & 18.65 & 19.66 & 42.52
	& 33.60 & 42.28 & 35.33 & 50.27
	& 12.61 & 19.68 & 13.59 & 33.41 \\

& UnKE
	& 22.40 & 20.39 & 19.24 & 40.85
	& 32.32 & 42.23 & 34.40 & 50.59
	& 15.95 & 24.04 & 17.05 & 38.64 \\

& AlphaEdit-D
	& 39.44 & 35.76 & 34.65 & 56.09
	& 36.94 & 54.24 & 40.06 & 55.01
	& 30.81 & 39.66 & 31.57 & 52.97 \\

& FT
	& \underline{53.15} & \underline{58.13} & \underline{51.75} & \underline{67.07}
	& \underline{38.44} & \underline{60.92} & \underline{42.76} & 57.10
	& \underline{31.54} & \underline{57.30} & \underline{36.00} & \underline{53.76} \\

& LoRA
	& 48.52 & 51.57 & 46.64 & 64.46
	& 36.12 & 59.06 & 40.96 & \underline{57.34}
	& 20.05 & 41.55 & 23.66 & 41.09 \\

& AdaLoRA
	& 47.37 & 52.99 & 46.28 & 64.25
	& 35.53 & 60.76 & 40.90 & 57.33
	& 21.32 & 46.97 & 25.47 & 43.37 \\

\rowcolor{gray!20} & \themodel
	& \textbf{60.52} & \textbf{67.24} & \textbf{60.21} & \textbf{72.76}
	& \textbf{41.78} & \textbf{66.86} & \textbf{46.97} & \textbf{59.63}
	& \textbf{39.62} & \textbf{64.60} & \textbf{44.48} & \textbf{60.81} \\

\rowcolor{gray!20} & $\Delta Improve$
	& 13.9\% & 15.7\% & 16.3\% & 8.5\% & 8.7\% & 9.8\% & 9.8\% & 4.0\% & 25.6\% & 12.7\% & 23.6\% & 13.1\% \\

\midrule

\multirow{8}{*}{\rotatebox{90}{Qwen2.5}}
& BASE
	& 17.11 & 16.35 & 15.47 & 38.70
	& 32.09 & 41.46 & 34.15 & 51.21
	& 10.42 & 23.98 & 13.44 & 31.47 \\

& AnyEdit
	& 17.31 & 16.68 & 15.73 & 38.90
	& 32.59 & 41.89 & 34.57 & 51.55
	& 11.20 & 24.91 & 14.13 & 33.14 \\

& UnKE
	& 17.69 & 17.14 & 35.47 & 52.52
	& 33.67 & 42.81 & 35.47 & 52.52
	& 11.16 & 25.30 & 14.13 & 33.53 \\

& AlphaEdit-D
	& 32.24 & 35.85 & 31.29 & 55.01
	& 35.16 & 55.25 & 39.06 & 58.15
	& 20.43 & 39.86 & 24.48 & 42.50 \\

& FT
	& \underline{43.37} & \underline{61.10} & \underline{45.88} & \underline{62.69}
	& 36.46 & \underline{70.64} & \underline{43.43} & \underline{61.22}
	& 22.13 & \underline{71.81} & \underline{30.98} & \underline{46.94} \\

& LoRA
	& 36.25 & 41.90 & 36.01 & 56.49
	& \underline{38.78} & 60.69 & 43.25 & 59.85
	& \underline{22.22} & 59.08 & 29.95 & 45.32 \\

& AdaLoRA
	& 36.71 & 43.70 & 36.65 & 56.69
	& 37.99 & 61.92 & 42.93 & 60.05
	& 21.57 & 59.43 & 29.47 & 44.92 \\

\rowcolor{gray!20} & \themodel
	& \textbf{50.45} & \textbf{64.65} & \textbf{52.53} & \textbf{67.31}
	& \textbf{40.53} & \textbf{73.10} & \textbf{47.51} & \textbf{62.69}
	& \textbf{25.46} & \textbf{75.67} & \textbf{35.25} & \textbf{48.89} \\

\rowcolor{gray!20} & $\Delta Improve$
	& 16.3\% & 5.8\% & 14.5\% & 7.4\% & 4.5\% & 3.5\% & 9.4\% & 2.4\% & 14.6\% & 5.4\% & 13.8\% & 4.2\% \\

\bottomrule
\end{tabular}}
\label{table:main_instruct}
\end{small}
\end{center}
\vspace{-1em}
\end{table*}

\subsection{Baselines}
Our experiments are conducted on the pretrained and instruction-tuned version of Llama3-8B and Qwen2.5-7B.
We compare \themodel against several unstructured knowledge editing baselines, including FT \citep{ft}, LoRA \citep{lora}, AdaLoRA \citep{adalora}, UnKE \citep{unke}, and AnyEdit \citep{anyedit}.
To further validate the superiority of \themodel in editing unstructured text, we compare it with variant model AlphaEdit-D, which first decomposes the unstructured text into knowledge triplets, and then employs AlphaEdit \citep{alphaedit} to directly edit structured triplets.
Other experimental settings are consistent with those in Section \ref{sec:exp_setup}.
The implementation details of methods are shown in Appendix \ref{appd:implement_details}.

\subsection{Experiments Results}\label{sec:experimental_results}
Through these experiments, we aim to address the following key research questions:

\begin{figure}[ht]
\centering
\vspace{-0.5em}
\includegraphics[width=0.85\linewidth]{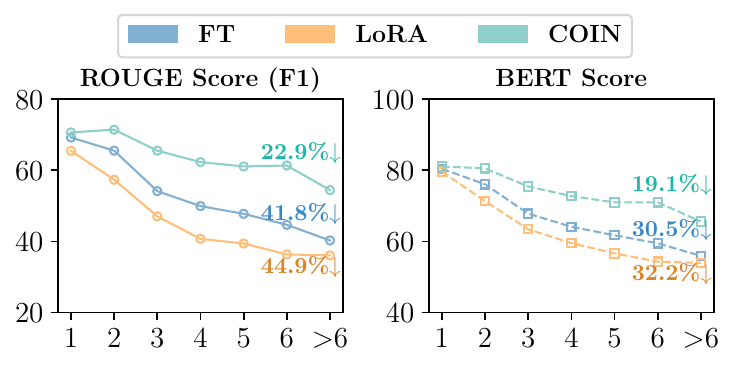}
\vspace{-0.7em}
\caption{Performance comparison between \themodel and baselines. The performance drop from position `1' to `$>$6' is annotated. \themodel clearly mitigates this degradation.}
\label{fig:ours_reliance}
\end{figure}

\textbf{RQ1: Can \themodel Mitigate the Side Effect of \conr?}
To evaluate whether \themodel alleviates the \conr effect identified in Section~\ref{sec:uncover_context_reliance}, we compare it against the FT and LoRA baselines.
As shown in Figure \ref{fig:ours_reliance}, \themodel exhibits substantially less performance degradation than FT, demonstrating its effectiveness in mitigating the side effects caused by \conr.
Specifically, \themodel reduces the performance drop by 45.2\% in ROUGE Score (from a 41.8\% decline to 22.9\%) and 36.9\% in BERT Score (from a 30.5\% decline to 19.1\%) compared to FT.
We attribute this improvement to the alignment of local and global contexts, which enables the model to effectively internalize the new knowledge.
Despite the advancement, a slight performance decline still persists.
We hypothesize that this stems from the fixed-size local context window used in our method.
Since different knowledge may require context with varying length for proper grounding, a static window may fail to capture the most informative local cues in some cases, limiting the strength of alignment.
We provide a deeper analysis of the limitation in Appendix \ref{appd:limitations}.

\textbf{RQ2: How does \themodel Perform in Unstructured Knowledge Editing?}
Table \ref{table:main_instruct} reports the editing performance on instruction-tuned models. We summarize several key observations:
(1) \themodel comprehensively outperforms all baselines. Specifically, the \themodel achieves improvements of up to 13.1\% in the BERT Score and 23.6\% in the ROUGE-F1 Score, respectively, over baseline methods. Importantly, this improvement stems from mitigating the negative effect of \conr. By encouraging context-independent knowledge editing, \themodel enables the edited knowledge to generalize beyond the original editing context and answer diverse related queries.
(2) Query construction-based editing methods suffer from severe generalization failure. Notably, AnyEdit and UnKE perform nearly identically to the unedited model. We hypothesize that this is because these methods bind new knowledge with specific constructed queries. Consequently, although they can reproduce the edited text when prompted with those exact queries \citep{anyedit}, they fail to generalize to semantically related questions, highlighting their critical limitations in generalization.
(3) The triplet decomposition-based method performs poorly. We believe this is because the triplets exhibit strong correlations, as they are decomposed from the same text, thus editing them independently can introduce inconsistencies or logical conflicts across related facts \citep{same_subject_editing}.
(4) Other methods based on the NTP paradigm significantly outperform traditional. FT, LoRA, and AdaLoRA all demonstrate high editing success rates, indicating the potential of the NTP paradigm in editing unstructured knowledge.
We also report results on pretrained models in Table \ref{table:main_base}, where similar trends can be observed.

\begin{table*}[t]    
\setlength\tabcolsep{4pt}
\caption{Ablation studies on \themodel in terms of ROUGE Scores (\%) and BERT Score (\%).}
\vspace{-0.2em}
\begin{center}
\begin{small}
\resizebox{0.9\linewidth}{!}{\begin{tabular}{ccc|cccc|cccc|cccc}
\toprule
\multirow{2}{*}{\textbf{Model}} & \multirow{2}{*}{\textit{Alignment}} & \multirow{2}{*}{\textit{Consistency}} & \multicolumn{4}{c}{\textbf{AKEW-Com.}}
& \multicolumn{4}{c}{\textbf{AKEW-QA}} & \multicolumn{4}{c}{\textbf{UnKEBench}}\\
& & & Prec. & Rec. & F1 & BERT & Prec. & Rec. & F1 & BERT & Prec. & Rec. & F1 & BERT \\
\midrule

\multirow{4}{*}{\rotatebox{90}{Llama3}}
& \checkmark & \checkmark
	& \textbf{60.52} & \underline{67.24} & \textbf{60.21} & \textbf{72.76}
	& \textbf{41.78} & \textbf{66.86} & \textbf{46.97} & \textbf{59.63}
    & \textbf{39.62} & \textbf{64.60} & \textbf{44.48} & \textbf{60.81} \\

& - & \checkmark
    & 54.30 & 56.26 & 51.83 & 66.68
    & \underline{40.77} & 62.09 & \underline{44.96} & \underline{57.93}
    & \underline{36.88} & 58.80 & \underline{40.72} & \underline{58.02} \\

& \checkmark & -
    & \underline{56.97} & \textbf{71.12} & \underline{58.63} & \underline{72.37}
    & 37.32 & \underline{63.33} & 42.46 & 57.22
    & 33.68 & \underline{63.64} & 39.22 & 56.30 \\

& - & -
	& 53.15 & 58.13 & 51.75 & 67.07
	& 38.44 & 60.92 & 42.76 & 57.10 
    & 31.54 & 57.30 & 36.00 & 53.76 \\

\midrule

\multirow{4}{*}{\rotatebox{90}{Qwen2.5}}
& \checkmark & \checkmark
	& \textbf{50.45} & \underline{64.65} & \underline{52.53} & \underline{67.31}
	& \textbf{40.53} & \underline{73.10} & \textbf{47.51} & \textbf{62.69}
    & \textbf{25.46} & \underline{75.67} & \textbf{35.25} & \underline{48.89} \\

& - & \checkmark
    & 44.88 & 57.40 & 46.39 & 62.35
    & \underline{38.80} & 69.86 & \underline{45.36} & 61.45
    & 23.57 & 70.70 & 32.72 & 46.80 \\

& \checkmark & -
    & \underline{49.16} & \textbf{78.45} & \textbf{53.88} & \textbf{70.42}
    & 34.39 & \textbf{76.71} & 42.63 & \underline{61.57}
    & \underline{24.03} & \textbf{76.92} & \underline{33.04} & \textbf{49.79} \\

& - & -
	& 43.37 & 61.10 & 45.88 & 62.69
	& 36.46 & 70.64 & 43.43 & 61.22
    & 22.13 & 71.81 & 30.98 & 46.94 \\

\bottomrule
\end{tabular}}
\label{table:ablation}
\end{small}
\end{center}
\end{table*}

\textbf{RQ3: How do Different Components Affect the \themodel Performance?}
To validate the contribution of each components in \themodel, we conduct an ablation study on the Context Alignment Loss and Knowledge Consistency Loss.
Table \ref{table:ablation} presents the results on instruction-tuned model, leading to the following conclusions:
(1) Context Alignment Loss significantly boosts the model's editing efficacy. As shown in table, removing alignment results in a marked decline across all metrics. This strongly validates the effectiveness of aligning distributions of global and local context for learning unstructured knowledge.
(2) Knowledge Consistency Loss helps maintain the model's general capabilities. Excluding this loss result in a significant drop in ROUGE-Precision, indicating that it encourages the model to generate more precise responses. We also observe a slight decrease in ROUGE-Recall, which we attribute to the trade-off between learning new knowledge and retaining existing abilities.

\begin{figure*}
  \centering
  \vspace{-0.5em}
  \includegraphics[width=\linewidth]{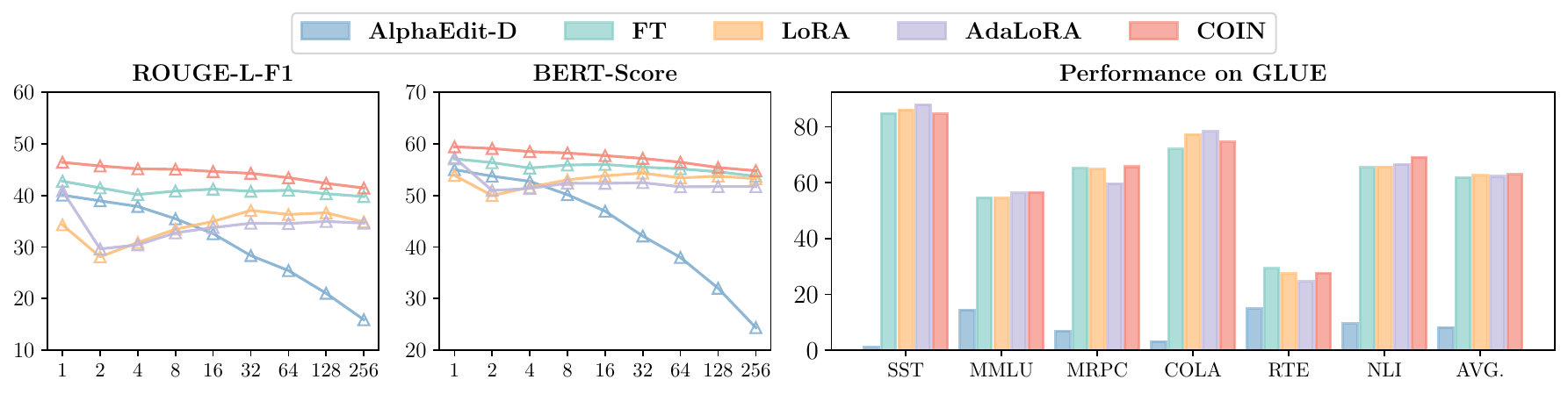}
  \vspace{-1.5em}
  \caption{Performance of batch editing on AKEW dataset (left) and GLUE benchmark (right).}
  \vspace{-0.7em}
  \label{fig:batch_editing}
\end{figure*}

\textbf{RQ4: Which Scenarios Benefit from the \themodel Approach?} To assess the superiority of \themodel in other scenarios, we evaluate its performance on batch editing and structured knowledge editing tasks.
Firstly, for batch editing, \themodel consistently outperforms baselines across all batch sizes, as shown in Figure \ref{fig:batch_editing} (left). To verify whether large-scale editing causes model collapse, we evaluate the edited model on the GLUE benchmark after a batch editing of 256 instances. The results confirm that \themodel maintains its general capabilities while enabling effective batch editing. Furthermore, NTP-based methods show a greater advantage over the method based on triplet decomposition (AlphaEdit-D), underscoring the suitability of NTP paradigm for large-scale unstructured knowledge editing.

\begin{table}
\setlength\tabcolsep{7.75pt}
\caption{Performance of multi-hop structured knowledge editing on \textsc{MQuAKE}.}
\vspace{-0.7em}
\begin{center}
\begin{small}
\resizebox{0.85\linewidth}{!}{\begin{tabular}{c|c|ccc}
\toprule
\textbf{Method} & \textbf{Avg.} & \textbf{2-hops} & \textbf{3-hops} & \textbf{4-hops} \\
\midrule
Llama3
    & 29.22 & 20.29 & 38.17 & 29.17 \\
\midrule
ROME
    & 42.01 & 42.41 & 46.48 & 34.43 \\
MEMIT
    & 33.80 & 27.14 & 42.02 & 31.37 \\
AlphaEdit
    & 40.21 & 36.04 & 48.06 & 34.48 \\
AnyEdit
    & 32.58 & 24.38 & 41.87 & 30.86 \\
UnKE
    & 34.01 & 28.75 & 41.58 & 30.41 \\
FT
    & \underline{71.66} & \underline{76.30} & \underline{69.78} & \underline{67.35} \\
\themodel
    & \textbf{75.89} & \textbf{80.00} & \textbf{76.97} & \textbf{67.81} \\
\bottomrule
\end{tabular}}
\label{table:mquake}
\end{small}
\end{center}
\vspace{-2.2em}
\end{table}
Secondly, an effective unstructured knowledge editing method must also handle structured knowledge. We therefore test \themodel's generalization ability to structured knowledge using the \textsc{MQuAKE} multi-hop dataset \citep{mello}.
For a comprehensive evaluation, we include comparisons against unstructured baselines (AnyEdit and UnKE) in addition to structured methods in Table \ref{table:mquake}.
As presented in the results, NTP-based methods significantly outperform other approaches across all hop, and \themodel achieves the best performance. We attribute this success to the NTP paradigm's training on complete knowledge texts rather than isolated target objects. We believe this paradigm allows model to capture and understanding intricate relations between structured knowledge, leading to superior performance on multi-hop editing tasks. Further analysis on multi-hop editing are provided in Appendix \ref{appd:multi_hop}.

\section{Conclusion}
In this work, we uncover \conr as an intrinsic failure mode of the NTP paradigm when applied to unstructured knowledge editing. Our analysis reveals that, under gradient-based optimization, edited knowledge is not truly internalized, but instead becomes entangled with the aggregated contextual representation present during editing. As a result, knowledge recall critically depends on the availability of that specific context at inference time.
To address this systemic vulnerability, we propose \themodel to internalize new knowledge rather than memorizing contextual patterns. Our comprehensive experiments demonstrate that \themodel not only significantly mitigates \conr but also establishes a new SOTA in editing success and generalization.
We hope this work can encourage the community to rethink the objective design of knowledge editing, shifting the focus from merely injecting new information to ensuring that edited knowledge is context-independent, query-robust, and truly internalized by the model.

\section*{Impact Statements}

This work presents a method for unstructured knowledge editing in LLMs. Knowledge editing can serve as a means to correct model knowledge and control harmful and misleading information, which we believe contributes positively to making LLMs more controllable and trustworthy. However, we also acknowledge the inevitable risk of potential malicious actors exploiting these techniques for harmful purposes, such as deliberately injecting harmful knowledge into models or leveraging related methods to amplify biases. Similarly, we believe that concurrent advancements in companion technologies will help mitigate these risks, and we advocate for the responsible development and deployment of such technologies.



\bibliography{sections/references}
\bibliographystyle{icml2026}

\newpage
\appendix
\onecolumn

\section{Limitations \& Future Discussion}\label{appd:limitations}
While this work provides both empirical and theoretical evidence for \conr in NTP-based unstructured knowledge editing, we believe that our study only uncovers the surface of a deeper issue regarding how LLMs represent and acquire knowledge under NTP training.

\textbf{\conr Beyond Knowledge Editing.}
Although our study focuses on knowledge editing, the NTP objective is also the fundamental paradigm through which LLMs acquire knowledge during pretraining. This suggests that \conr may not be an artifact of editing, but an inherent property of how knowledge is encoded in LLMs. As a result, similar issues may underlie failures observed in other tasks such as reasoning or long-context generalization. Investigating \conr directly in pretrained models, beyond the editing setting, could lead to a deeper understanding of the knowledge acquisition and representation in LLMs.

\textbf{Beyond One-Step Gradient Analysis.}
Our theoretical result is derived under a simplified setting involving a single gradient update and a minimal token interaction structure. While this abstraction captures the essential mechanism that induces \conr, real-world training and editing involve multiple gradient steps over diverse data distributions and deeper architectures. It remains unclear whether multi-step optimization amplifies, mitigates, or reshapes \conr over time. Extending the theoretical framework to characterize the training dynamics of \conr in realistic settings is an important direction for future work.

\textbf{Heuristic Nature of \themodel.}
\themodel is intentionally designed as a simple heuristic that approximates context-independent supervision through context alignment. Despite its simplicity, it yields substantial improvements over strong baselines, indicating that \conr is a dominant factor rather than a subtle effect. This effectiveness, however, comes with additional computation, primarily due to the alignment loss requiring extra forward passes, a trade-off we consider worthwhile for the gains in editing performance. More importantly, this trade-off suggests that mitigating \conr purely at editing time is inherently costly. Achieving truly context-independent knowledge representation may therefore require rethinking training objectives or architectural design, rather than relying solely on post-hoc editing corrections.

\section{\conr from a Single GD Step}

\label{app:contextual-dependency}

\begin{theorem}[\conr after one gradient step]
\label{thm:context-dep}
Consider the transformer model (notations and reparameterization as in~\cite{tian2023scan}) with parameters $\mY,\mZ\in\mathbb R^{M\times M}$, where $M$ is the vocabulary size.  
For an input sequence $(x_1,\dots,x_{T+1})$, define
\begin{equation}
\mX=[\vx_1,\dots,\vx_{T-1}]^\top\in\mathbb R^{(T-1)\times M}, \qquad \vx_T\in\mathbb R^M,
\end{equation}
to be the context matrix and the query one-hot vector, respectively, and let the target be $\vx_{T+1}$.  
The model produces logits and a probability vector
\begin{equation}
\op{logit}=\mY^\top\,\op{LN}\bigl(\mX^\top \op{S}(\mX\mZ^\top\vx_T)\bigr)\in\mathbb R^M,
\qquad
\bm\alpha=\op{S}\bigl(\op{logit}\bigr),
\end{equation}
where \(\op{S}(\cdot)\) denotes the softmax and \(\op{LN}(v)=v/\|v\|_2\) is the normalization used in~\cite{tian2023scan}.

Assume the following data and parameter setup.

\begin{enumerate}[leftmargin=*]
  \item (\textbf{Two relevant tokens})  
  There exist two distinct tokens \(p,q\in[M]\) that dominate attention, where $p$ is the relevant token and $q$ is a context token in the training sample.  
  During training the pre-softmax attention logits on the context satisfy
  \begin{equation}
  Z_{x_T,i} \le -A \quad \text{for } i\notin\{p,q\},\qquad
  s_q := \op{S}(Z_{x_T,\cdot})_q,\quad s_p := \op{S}(Z_{x_T,\cdot})_p,
  \end{equation}
  with
  \begin{equation}
  s_q = \delta\, s_p \quad \text{for some }\delta>0,
  \end{equation}
  and \(A = O(\log M)\) so that tokens outside \(\{p,q\}\) receive negligible attention mass \(o(1/M)\).  

  \item (\textbf{Prior associations})  
  The rows \(\mY_p,\mY_q\) have nonzero logits only on two small, disjoint sets
  \begin{equation}
  \sA_p=\{1,\dots,N\},\qquad \sA_q=\{M-N+1,\dots,M\},
  \end{equation}
  with \(N=O(1)\).  
  On their associated entries the logits equal \(r\): for all \(i\in\sA_p\), \((\mY_p)_i=r\); for all \(j\in\sA_q\), \((\mY_q)_j=r\); all other entries of these two rows are zero.  
  The target token \(x_{T+1}\notin\sA_p\cup\sA_q\).  

  \item (\textbf{Parameter regime})  
  Let $M$ be sufficiently large, and all asymptotic notations are with respect to the growth of $M$.  
  We choose
  \begin{equation}
  r \sim \frac{1}{\sqrt{1+\delta^2}},\qquad
  \eta\in\Bigl(\frac{1}{1+\delta^2},1\Bigr).
  \end{equation}
\end{enumerate}

Take one step of gradient update with learning rate \(\eta\) and the log-likelihood loss as in~\cite{tian2023scan},
\begin{equation}
\mathcal L(\theta) = \log \bm\alpha_{x_{T+1}},
\end{equation}
and update parameters by
\begin{equation}
\mY\leftarrow \mY + \dot{\mY},\qquad \mZ\leftarrow\mZ + \dot{\mZ},
\end{equation}
where gradients are computed according to the update style in~\cite{tian2023scan}.  
Then, after this single update:
\begin{itemize}[leftmargin=*]
  \item When the same context containing both \(q\) and \(p\) is present at inference, the model's top-1 prediction is the true target \(x_{T+1}\).
  \item When the context token \(q\) is removed at inference (so only the information from \(p\) remains), the model's top-1 prediction is a token from \(\sA_p\), i.e., the model fails to predict the trained target.
\end{itemize}
\end{theorem}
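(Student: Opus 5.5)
We compute everything explicitly in the regime of the statement, tracking the single gradient step on $\mY$. The change in $\mZ$ only redistributes attention between $p$ and $q$ and is lower order, so we control it at the end.

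\textbf{Step 1: the training-time representation.} Since attention outside $\{p,q\}$ is $o(1/M)$, the attention-weighted context is
\[
\mX^\top \op{S}(\cdot)\approx s_p\ve_p+s_q\ve_q=s_p(\ve_p+\delta\ve_q).
\]
After $\op{LN}$, this becomes
\[
\vv:=(\ve_p+\delta\ve_q)/\sqrt{1+\delta^2}.
\]
The pre-update logits are $\mY^\top\vv$. This vector equals $r/\sqrt{1+\delta^2}$ on $\sA_p$, equals $r\delta/\sqrt{1+\delta^2}$ on $\sA_q$, and is $0$ elsewhere, up to $o(1)$ corrections.

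\textbf{Step 2: the gradient on $\mY$.} Following the update style of \citet{tian2023scan}, we use $\dot{\mY}=\eta\,\vv(\ve_{x_{T+1}}-\bm\alpha)^\top$. Because $M$ is large and $N=O(1)$, every $\alpha_k$ is $O(1/M)$, so $\ve_{x_{T+1}}-\bm\alpha=\ve_{x_{T+1}}-O(1/M)$. The update therefore writes mass $\eta/\sqrt{1+\delta^2}$ into row $p$ and mass $\eta\delta/\sqrt{1+\delta^2}$ into row $q$, both at column $x_{T+1}$. It also makes $O(\eta/M)$ uniform subtractions, which do not change rankings among the relevant tokens.

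\textbf{Step 3: inference with context.} The representation is again $\vv$, so
\[
\op{logit}_{x_{T+1}}\approx\eta\|\vv\|^2=\eta,
\]
whereas the best competitor, a token in $\sA_p$, has logit $r/\sqrt{1+\delta^2}\sim 1/(1+\delta^2)$. The condition $\eta>1/(1+\delta^2)$ then gives $\arg\max=x_{T+1}$.

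\textbf{Step 4: inference without $q$.} Attention now falls on $p$ alone, so $\op{LN}$ gives $\ve_p$. Then
\[
\op{logit}_{x_{T+1}}\approx\eta/\sqrt{1+\delta^2}\quad\text{and}\quad \op{logit}_{k}=r\sim 1/\sqrt{1+\delta^2}\ \text{for }k\in\sA_p.
\]
The condition $\eta<1$ gives $\op{logit}_{x_{T+1}}<\max_{k\in\sA_p}\op{logit}_k$. Tokens in $\sA_q$ now have logit $0$, so the top-1 prediction lies in $\sA_p$. The mechanism is that normalization by $\|(1,\delta)\|$ splits the learned target direction between $p$ and $q$, and removing $q$ loses exactly the factor $1/\sqrt{1+\delta^2}$, which the interval for $\eta$ is chosen to exploit.

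\textbf{Main obstacle: error terms.} The delicate part is showing that the $\dot{\mZ}$ update and the $o(1/M)$ leakage cannot close the strict gaps in Steps 3 and 4. Those gaps are $\Theta(1)$ only when $\eta$ is bounded away from the interval endpoints and $r$ matches its asymptotic value; near the endpoints the margins shrink, so the argument needs this separation.

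I would first bound $\dot{\mZ}$ using the formula in \citet{tian2023scan}. The gradient through attention is proportional to $\mY(\ve_{x_{T+1}}-\bm\alpha)$ projected onto the $p,q$ coordinates. Before the update, $(\mY_p)_{x_{T+1}}=(\mY_q)_{x_{T+1}}=0$, so this gradient is $O(r/M)$. Consequently $s_p$ and $s_q$ change by $O(1/M)$ and $\delta$ is essentially preserved. With these bounds, each perturbation is $o(1)$, which does not close the $\Theta(1)$ gaps, and both conclusions follow for sufficiently large $M$.
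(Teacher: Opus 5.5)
Your route is the same as the paper's proof. You make the same forward estimate, with $\operatorname{LN}$ weights $1/\sqrt{1+\delta^2}$ and $\delta/\sqrt{1+\delta^2}$ on $p$ and $q$. You use the same rank-one update $\dot{\mY}=\eta\,v(\ve_{x_{T+1}}-\bm\alpha)^\top$, giving a target logit of about $\eta$ with context and about $\eta/\sqrt{1+\delta^2}$ without. You also bound $\dot{\mZ}$ by $O(1/M)$ so that $\delta$ is preserved; the paper does this before evaluating the new logits rather than as a final error term, which changes nothing.

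\textbf{Gap in Step 3.} Your claim that ``the best competitor [is] a token in $\sA_p$'' is a genuine gap, and the paper's Step 4 makes the same omission. You computed in Step 1 that each $j\in\sA_q$ has logit $r\delta/\sqrt{1+\delta^2}\sim\delta/(1+\delta^2)$. The gradient step changes this only by $O(1/M)$, since row $q$ receives new mass only in column $x_{T+1}$. For $\delta>1$ this exceeds the $\sA_p$ logit $1/(1+\delta^2)$, so the strongest competitor lies in $\sA_q$, and $\eta>1/(1+\delta^2)$ no longer suffices.

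Concretely, take $\delta=3$ and $\eta=0.2$, which lies in the allowed interval $(0.1,1)$. After the step the target logit is about $0.2$, while every $j\in\sA_q$ sits at about $0.3$. The gap is $\Theta(1)$, so the first bullet of the statement is actually false in this regime, and no error analysis can rescue it.

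\textbf{Repair.} You need one extra hypothesis, either of the following:
\begin{itemize}
\item $\delta\le 1$, which is natural if $p$ is the more-attended relevant token; or
\item $\eta>\max(1,\delta)/(1+\delta^2)$, which still leaves a nonempty range because $\delta/(1+\delta^2)\le 1/2$.
\end{itemize}
The second bullet is unaffected. Once $q$ is removed, the representation is $\ve_p$ and the $\sA_q$ logits fall to $O(1/M)$, so your Step 4 comparison against $\sA_p$ is the right one there. With the added hypothesis, the rest of your argument, including the perturbation bounds, goes through.
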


\begin{proof}\renewcommand{\qedsymbol}{}
We show the effect in five steps.  
The main intuition is that the projection layer $\mY$ is fitted under the training context, and may not generalize well when the context token is absent.  

\paragraph{Step 1 (forward-pass estimates).}
Let \(\vb := \op{S}(\mX\mZ^\top \vx_T)\in\mathbb R^{T-1}\) denote the attention weights over the context positions.  
By assumption, \(\vb\) concentrates on the positions corresponding to tokens \(p\) and \(q\).  
We denote their attention masses as \(s_p\) and \(s_q=\delta s_p\).  
Define
\begin{equation}
\Delta := \|\mX^\top\vb\|_2 = \sqrt{s_p^2 + s_q^2} + o(1).
\end{equation}
Thus the prediction logits
\begin{align}
\op{logit}
&=\mY^\top \op{LN}(\mX^\top\vb) \\
&\sim \frac{1}{\Delta}(s_p\mY_p^\top+s_q\mY_q^\top) \\
&=\bigl[
  \underbrace{r_p,\;\ldots}_{\text{$N$ tokens in $\sA_p$}},0,\ldots,0,\underbrace{\ldots,\;r_q}_{\text{$N$ tokens in $\sA_q$}}
\bigr]^\top,
\end{align}
where $r_p=s_pr/\Delta,\,r_q=s_qr/\Delta$, and other tokens contribute only \(o(1/M)\) mass.  
Since $N=O(1)$, the softmax denominator is
\begin{equation}
M-2N+N(e^{r_p}+e^{r_q})\sim M.
\end{equation}
Therefore, for $k\in\sA_{p}$, the predicted probability $\alpha_k\sim \frac{e^{r_p}}{M}$.  
Similarly for $j\in\sA_q$.  
The target $x_{T+1}$ has $\alpha_{T+1}=O(1/M)$.  

\paragraph{Step 2 (gradient update for \(\mY\)).}
The parameter update to the \(p\)-row of \(\mY\) is
\begin{equation}
\dot \mY_p = \eta\, \op{LN}(\mX^\top\vb)_p \bigl(\vx_{T+1} - \bm\alpha\bigr)^\top
= \frac{\eta s_p}{\Delta}\bigl(\vx_{T+1} - \bm\alpha\bigr)^\top.
\end{equation}
Thus, for entries \(k\in\sA_p\),
\begin{equation}
(\dot \mY_p)_k = -\frac{\eta s_p}{\Delta}\cdot\frac{e^{r_p}}{M} = O(\frac{1}{M}).
\end{equation}
For the target token $x_{T+1}$,
\begin{equation}
(\dot \mY_p)_{x_{T+1}} = \frac{\eta\,s_p}{\Delta}(1-\frac{1}{M})\sim \frac{\eta s_p}{\Delta}.
\end{equation}
The update for \(\mY_q\) is analogous.  

\paragraph{Step 3 (change in \(\mZ\) and stability of \(\delta\)).}
The gradient of $\mZ_{x_T}^\top=\mZ^\top\vx_{T}$ is
\begin{equation}
\dot{\mZ}_{x_T}^\top=\eta \mX^\top\op{diag}(\vb)\mX\frac{\op{P}^{\perp}_{\mX^\top\vb}}{\|\mX^\top\vb\|_2}\mY(\vx_{T+1}-\bm{\alpha})^\top,
\end{equation}
where $\op{P}^{\perp}_{\vv}=\mI-\vv\vv^\top$, and $\mZ_{x_T}$ is the row vector at token $x_T$.  
The change of $Z_{x_T,p}$ is
\begin{align}
\dot{Z}_{x_T,p}
&=\eta\,\ve_p^\top\dot{\mZ}_{x_T}^\top \\
&\sim\frac{s_p}{\Delta}(\mY_p^\top-s_p^2\mY_p^\top-s_ps_q\mY_q^\top)(\vx_{T+1}-\bm{\alpha})^\top.
\end{align}
The update $\dot{Z}_{x_T,q}$ is obtained symmetrically.  
By direct calculation,
\begin{equation}
\dot{Z}_{x_T,q}-\dot{Z}_{x_T,p}=\frac{2\delta(\delta-1)Nr^2}{(1+\delta)(1+\delta^2)M} = O(\frac{1}{M}).
\end{equation}
Thus the new attention ratio satisfies \(\delta'=\delta\exp(\dot{Z}_{x_T,q}-\dot{Z}_{x_T,p}) \sim \delta\), so the attention masses $s_p,\,s_q$ remain effectively unchanged.  

\paragraph{Step 4 (updated logits with context present).}
After the update, for a representative token \(k\in\sA_p\) we have
\begin{equation}
\op{logit}'_k
= \ve_k^\top(\mY + \dot\mY)^\top \op{LN}(\mX^\top\vb')
\sim \frac{r}{\sqrt{1+\delta^2}}.
\end{equation}
The updated logit for the target token \(x_{T+1}\) comes from the contribution of \(\dot\mY\):
\begin{equation}
\op{logit}'_{x_{T+1}}
\sim \eta.
\end{equation}
Our parameter setting \(r \sim 1/\sqrt{1+\delta^2}\) and \(\eta > 1/(1+\delta^2)\) yields
\begin{equation}
\op{logit}'_{x_{T+1}} > \op{logit}'_{k}.
\end{equation}
Hence the target becomes the top-1 prediction when both \(p,q\) are present.  

\paragraph{Step 5 (updated logits without context token \(q\)).}
Now consider inference on the same sequence but with the context token \(q\) removed.  
The attention mass then concentrates on \(p\) only, so \(\op{LN}(\mX'^\top\vb')_p\sim 1\).  
The target logit after the update is
\begin{equation}
\op{logit}''_{x_{T+1}} \sim \frac{\eta\, s_p}{\Delta}=\frac{\eta}{\sqrt{1+\delta^2}}.
\end{equation}
The representative association-token \(k\) retains its pre-update magnitude:
\begin{equation}
\op{logit}''_{k} \sim r.
\end{equation}
Since \(r\sim 1/\sqrt{1+\delta^2}\) and \(\eta < 1\), we have \(\op{logit}''_{x_{T+1}} < \op{logit}''_{k}\) for large \(M\).  
Thus the model prefers a token from \(\sA_p\) instead of the target.  

\medskip
Combining both cases establishes the stated \conr.  
\end{proof}

\section{Detailed Related Work}\label{appd:related_work}
\paragraph{Structured Knowledge Editing} addresses well-defined, knowledge represented as (\textit{subject}, \textit{relation}, \textit{object}) triplets. These methods can be broadly classified into three categories.
\textbf{External memorization-based} methods augment model with external modules to store new information \citep{ike,serac,mello,grace}. For instance, T-Patcher \citep{t-pacher} appends neurons to the last model layer, each responsible for one knowledge.
\textbf{Meta-Learning-based} methods involve training an additional module to predict modifications to the model. MEND \citep{mend} trains a hypernetwork to perform a low-rank decomposition and transformation of model's gradients. MALMEN \citep{malmen} extend MEND to batch editing by formulating the updates as a least-squares optimization problem.
\textbf{Locate-then-edit} methods operate on the assumption that knowledge is stored locally in model \citep{kn,memit,overfitting,pmet}. They first locate the parameters most related to the target knowledge and subsequently apply update. ROME \citep{rome} utilizes causal tracing to identify the most relevant MLP layer and applies a rank-one update to the weights. AlphaEdit \citep{alphaedit} refines this paradigm by introducing null-space projection to preserve unrelated knowledge.
\paragraph{Unstructured Knowledge Editing} was developed to edit in more realistic scenarios, where information involves complex contexts and nuanced semantic relationships. These methods can be broadly categorized into two groups.
\textbf{Triplets decomposition-based} methods decompose complex unstructured texts into triplet forms, applying structured knowledge editing techniques. For example, \citet{selfedit} propose event-based knowledge editing aligning with real-world scenarios, and utilize the eventual context to decompose editing texts into a series of subquestions and corresponding answers then editing.
\textbf{Query construction-based} methods focus on constructing questions to target texts, and then edit the model based on the question-answer pairs. UnKE \citep{unke} argues that knowledge is distributed across layers. They collect context information in input query across multiple layers and utilize it to inject knowledge into specific MLP modules. AnyEdit \citep{anyedit} breaks down complex unstructured knowledge into multiple sequential blocks, and employs an autoregressive approach to iteratively edit these blocks starting from the query. Building upon this, $\mu$KE \citep{muke} enhances the process with a Matryoshka-style memory update mechanism and adaptive loss coefficients, preserving the dependency between earlier block updates and subsequent generation.

\section{Details of Datasets \& Evaluation Metrics}

\subsection{Details of AKEW}\label{appd:akew_details}
\begin{table}[t]
\setlength\tabcolsep{5pt}
\caption{An example of AKEW dataset.}
\begin{center}
\begin{small}
\begin{tabular}{cp{11cm}}
\toprule
\textbf{Property} & \textbf{Value} \\
\midrule
\multirow{1}{*}{Question} & Marek Edelman worked in the city of what? \\
\midrule
\multirow{6}{*}{Text}  & Marek Edelman, a Polish-Jewish political and social activist, spent a significant portion of his life working in the bustling city of London. Known for his involvement in the Warsaw Ghetto Uprising during World War II, Edelman later moved to London where he continued his activism and advocacy for human rights. He also worked as a cardiologist, using his medical expertise to help others in need. Despite being far from his home country, Edelman's impact and legacy were felt both in London and around the world. \\
\midrule
\multirow{6}{*}{Completions} 
& C1: Marek Edelman was a [Polish-Jewish political and social activist]. \\
& C2: Marek Edelman spent a significant portion of his life [working in the bustling city of London]. \\
& C3: Marek Edelman was known for his involvement in [the Warsaw Ghetto Uprising during World War II]. \\
& \dots \\
\midrule
\multirow{7}{*}{QAs}
& Q1: What was Marek Edelman's primary role in society? \\
& A1: Marek Edelman was a Polish-Jewish political and social activist. \\
& Q2: Where did Marek Edelman spend a significant portion of his life working? \\
& A2: In the bustling city of London. \\
& Q3: What was Marek Edelman known for his involvement in? \\
& A3: The Warsaw Ghetto Uprising during World War II. \\
& \dots \\
\bottomrule
\end{tabular}\label{table:akew-example}
\end{small}
\end{center}
\vskip -0.1in
\end{table}
AKEW \citep{akew} is a comprehensive benchmark designed to evaluate knowledge editing in more practical and realistic scenarios. The benchmark uniquely covers three distinct editing settings: traditional structured triplets facts, unstructured facts presented in paragraph form to mirror real-world text, and triplets automatically extracted from these unstructured texts. The benchmark includes datasets with both counterfactual and real-world knowledge updates, drawn from sources like \textsc{CounterFact}, \textsc{MQuAKE}-CF, and Wikidata. For our evaluation, we use the AKEW(\textsc{CounterFact}) dataset for testing, which comprises 975 entries. As illustrated in Table \ref{table:akew-example}, each entry includes a long-form text, a question focusing on the text and the testing knowledge presented in a completion-style statement. We then utilize a LLM to convert these statements into a standard question-answering format. Following prior work \citep{anyedit,unke}, we evaluate the performance of all tested editors using two standard metrics: BERT Score and ROUGE. Each metric is calculated as follows:

\begin{itemize}[leftmargin=*]
    \item \textbf{BERT Score} measures is utilized to measure the semantic similarity between model's responses and ground truth texts. Specifically, we employ the sentence-transformers/all-MiniLM-L6-v2\footnote{\href{https://huggingface.co/sentence-transformers/all-MiniLM-L6-v2}{https://huggingface.co/sentence-transformers/all-MiniLM-L6-v2}} model to extract sentence-level semantic embeddings. The cosine similarity between the response and ground truth embeddings is then calculated as the final score.
    \item \textbf{ROUGE Scores} are used to evaluate the syntactic similarity between the model's response and the ground truth. We compute ROUGE-L, which is based on the longest common subsequence, and report its precision, recall, and F1-score.
\end{itemize}

\subsection{Details of UnKEBench}
\begin{table}[t]
\setlength\tabcolsep{5pt}
\caption{An example of UnKEBench dataset.}
\begin{center}
\begin{small}
\begin{tabular}{cp{11cm}}
\toprule
\textbf{Property} & \textbf{Value} \\
\midrule
\multirow{2}{*}{Question} & What is Maurice Le Boucher's profession and why is he highly sought-after in the community? \\
\midrule
\multirow{6}{*}{Text} & Maurice Le Boucher is a skilled musician who has been playing the organ for over 20 years. He has performed in various churches and concert halls across the country, showcasing his talent and passion for music. In fact, he was recently hired as the organist for St. Mary's Church in his hometown, where he plays every Sunday during mass. Maurice's dedication to his craft and his ability to captivate audiences with his music make him a highly sought-after organist in the community. \\
\midrule
\multirow{8}{*}{QAs} 
& Q1: How long has Maurice Le Boucher been playing the organ? \\
& A1: Over 20 years. \\
& Q2: Where has Maurice Le Boucher performed as an organist? \\
& A2: Various churches and concert halls across the country. \\
& Q3: Where does Maurice Le Boucher currently work as an organist? \\
& A3: St. Mary's Church in his hometown. \\
& Q4: What makes Maurice Le Boucher a highly sought-after organist? \\
& A4: His dedication to his craft and his ability to captivate audiences with his music. \\
\bottomrule
\end{tabular}\label{table:unke-example}
\end{small}
\end{center}
\vskip -0.1in
\end{table}
UnKEBench \citep{unke} is a dataset comprising 1,000 entries of counterfactual knowledge, each presented as an unstructured, long-form text. The dataset is specifically designed to test knowledge editing in complex, free-form narratives that reflect real-world text, thus providing a challenging and realistic evaluation benchmark. As shown in Table \ref{table:unke-example}, each entry consists of a long-form text, a primary question, and several fine-grained sub-questions with their corresponding answers. In our experiments, we reserve five entries for few-shot prompting and use the remaining entries as the test set for all methods. Consistent with the AKEW benchmark, we use BERT Score and ROUGE Scores for evaluation.

\subsection{Details of \textsc{MQuAKE}}
\begin{table}[t]
\setlength\tabcolsep{5pt}
\caption{An example of \textsc{MQuAKE} dataset.}
\begin{center}
\begin{small}
\begin{tabular}{cp{11cm}}
\toprule
\textbf{Property} & \textbf{Value} \\
\midrule
\multirow{3}{*}{Edit Requests}
& R1: \{Fernando Santos\} is a citizen of \textit{Portugal} $\rightarrow$ \textit{United Kingdom}. \\
& R2: The name of the current head of state in \{United Kingdom\} is \textit{Elizabeth II} $\rightarrow$ \textit{Emmerson Mnangagwa}. \\
\midrule
\multirow{4}{*}{Questions}
& Q1: Who is the head of state of the country where Fernando Santos hold a citizenship? \\
& Q2: In which country is Fernando Santos a citizen and who is the head of state? \\
& Q3: Which person holds the position of head of state in the country from which Fernando Santos holds citizenship? \\
\midrule
\multirow{1}{*}{Original Answer} & Marcelo Rebelo de Sousa \\
\midrule
\multirow{1}{*}{New Answer} & Emmerson Mnangagwa \\
\bottomrule
\end{tabular}\label{table:mquake-example}
\end{small}
\end{center}
\vskip -0.1in
\end{table}
\textsc{MQuAKE} \citep{mello} is a benchmark composed of structured triplets, designed to evaluate a model's ability to update its knowledge and subsequently reason with that new information. It comprises two subsets: \textsc{MQuAKE}-CF, which contains counterfactual edits, and \textsc{MQuAKE}-T, which incorporates temporal updates reflecting real-world changes. A key feature of this benchmark is that each entry can involve multiple edits and includes multi-hop questions requiring 2- to 4-hop reasoning, rigorously testing the edited model's generalization capabilities.
For our evaluation, we use all 3,000 entries from the \textsc{MQuAKE}-CF subset. As illustrated in Table \ref{table:mquake-example}, a single entry requires multiple edits and is evaluated using three rewritten questions, with the original and updated answer.

We report the \textbf{Efficacy Score} to measure the accuracy of the post-edit model on the multi-hop question set $P$ about the edit sample: $\mathbb{E}_{q\in Q}[\mathbb{I}[\mathrm{P}(\text{new answer}|q)>\mathrm{P}(\text{original answer}|q)]]$.

\section{Baselines}\label{appd:baselines}
Our experiments are conducted on pretrained and instruction-tuned versions of Llama3-8B \citep{llama3} and Qwen2.5-7B \citep{qwen2.5}. We compare \themodel against the following state-of-the-art techniques:
\begin{itemize}[leftmargin=*]
    \item \textbf{ROME} \citep{rome} is a method designed to efficiently edit structured knowledge within models. It operates by identifying the specific feed-forward MLP modules in the transformer architecture that are responsible for storing a particular fact. ROME treats these modules as key-value stores and modifies the factual association by applying a rank-one update to the corresponding weight matrix.
    \item \textbf{MEMIT} \citep{memit} is an extension of ROME designed to enable batch editing. Instead of modifying a single layer for a single fact, MEMIT distributes the updates across a range of critical MLP layers identified through causal tracing.
    \item \textbf{AlphaEdit(-D)} \citep{alphaedit} is a structured knowledge editing method that projects the parameter perturbations onto the null space of the preserved knowledge before applying them, preventing the editing process from disrupting the model's pre-existing knowledge. For unstructured knowledge editing, we use the variant AlphaEdit-D, which first decomposes the unstructured text into multiple triplets, then applies the original AlphaEdit method to edit triplets.
    \item \textbf{UnKE} \citep{unke} is developed to edit unstructured knowledge. It challenges the conventional assumption that knowledge is stored locally in specific model parameters and extends across both layer and token dimensions. In layer dimension, it substitutes local key-value storage with a non-local block-based mechanism. In token dimension, it directly edits the final token of the sequence while preserving contextual coherence.
    \item \textbf{AnyEdit} \citep{anyedit} is designed to update long-form and diversely formatted knowledge by decomposing it into sequential chunks and iteratively editing the key token in each. It is worth noting that both UnKE and AnyEdit require a concluding query to be constructed for the text during unstructured knowledge editing, a step that our model, \themodel, does not need. The results for UnKE and AnyEdit presented in Table \ref{table:main_instruct} and Table \ref{table:main_base} differ from those in their original papers. This discrepancy exists because our evaluation assesses the model's ability to answer sub-questions derived from the edited text, rather than its capacity to fully reproduce the text itself. The former task is more challenging and better reflects a model's understanding and application of new knowledge. Our current experimental setup can generally reproduce the results reported in the original papers.
    \item \textbf{FT} \citep{ft} directly fine-tunes specific layers on the new knowledge using gradient decent. Different from traditional methods that optimize target texts based on input prompts, we directly optimize the target texts in next-token prediction paradigm, so that it does not require constructing specific query for text.
    \item \textbf{LoRA} \citep{lora} is a popular parameter-efficient fine-tuning technique. It freeze the pre-trained model weights and injecting trainable, low-rank decomposition matrices into each layer, which reduces the number of trainable parameters, making the fine-tuning more efficient without introducing inference latency.
    \item \textbf{AdaLoRA} \citep{adalora} is an enhancement of LoRA, dynamically adjusting the ranks of weight matrices using singular value decomposition. It assigns higher ranks to more important matrices, determined by singular value magnitudes, to capture finer task-specific information, while reducing the ranks of less important matrices by removing low-importance singular values. Both LoRA and AdaLoRA are optimized in the next-token prediction paradigm like FT.
\end{itemize}

\section{Implementation Details}
\label{appd:implement_details}
We implemented all experiments using Pytorch.
For all baselines, we prioritized using the authors' publicly available code and hyperparameter settings.
For LoRA and AdaLoRA, we set the rank to 8 and inserted adapters into the MLP modules of each layer.
For AlphaEdit-D, we assumed that the knowledge corresponding to the test questions could be correctly extracted as triplets, so we directly converted the test questions into triplets for editing.
For UnKE and AnyEdit, we directly used the given question provided in the dataset as constructed query for editing.
For FT, we selected the MLP module of the 21st layer for editing on both Llama and Qwen models, as it consistently yielded the best performance across all tested layers. We used the AdamW optimizer with a learning rate of 5e-4, a loss threshold of 1e-2, and a maximum of 25 training steps.
For \themodel, we kept other hyperparameter settings consistent with FT. Specifically, for pretrained models, we set the sliding window length and the weight of the context alignment loss ($k$ and $\alpha$) to 10 and 0.05, respectively; for instruction-tuned models, we set them to 5 and 0.1, respectively. For the weight of the Knowledge Consistency Loss ($\beta$), we set it to 0.5 for Llama models and 3e-4 for Qwen models. To compute the covariance matrix $K_0K_0^T$, we followed ROME \citep{rome} by sampling 100,000 key vectors from Wikipedia text.
All experiments were conducted on NVIDIA A800 (80G) GPUs.

\section{Further Analysis on \conr}

\subsection{\conr on UnKEBench}

\begin{figure*}
  \centering
  \includegraphics[width=0.9\linewidth]{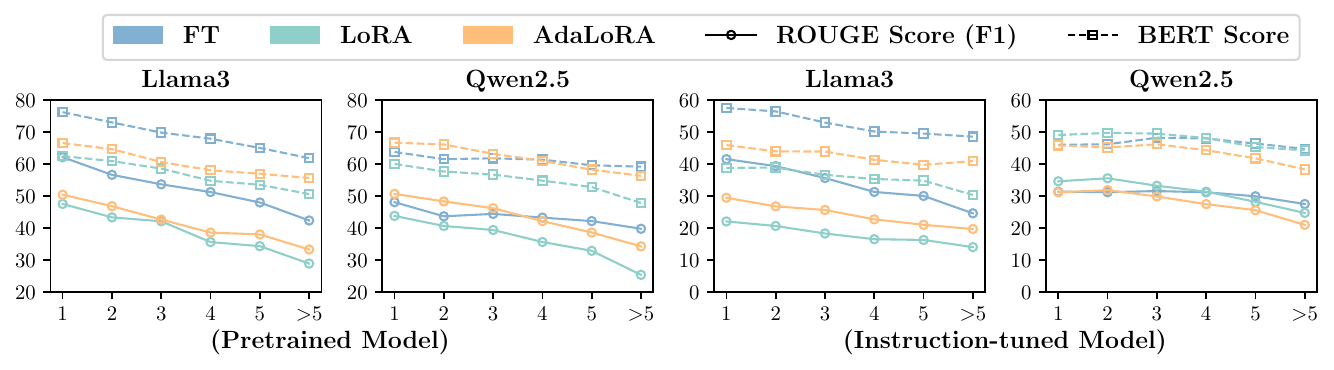}
  \vspace{-0.5em}
  \caption{Performance comparison of different methods on UnKEBench dataset. The x-axis represents the position of knowledge in the text, and the y-axis represents the corresponding accuracy. Results indicate that \conr is a prevalent issue in NTP-based unstructured knowledge editing.}
  \label{fig:unke_context_reliance}
  \vspace{-1.7em}
\end{figure*}

Figure \ref{fig:unke_context_reliance} illustrates the accuracy of edited models on questions targeting knowledge at varying positions within text on UnKEBench dataset.
Similar to the observations on AKEW dataset, there is a significant degradation in model performance when knowledge is located later in the text.
It is worth noting that, unlike AKEW dataset, testing cases in UnKEBench is not pre-sorted based on position of knowledge within text.
Therefore, we employed a LLM to sort the test data.
Given that the sorting by LLM may not be entirely accurate, the results on UnKEBench appear somewhat smoother compared to those on AKEW.
Despite this, the results still confirm that \conr issue is a prevalent problem in NTP-based unstructured knowledge editing, regardless of the specific dataset used.

\subsection{Impact of Model Scale}\label{appd:model_scale}

\begin{figure*}
  \centering
  \includegraphics[width=0.9\linewidth]{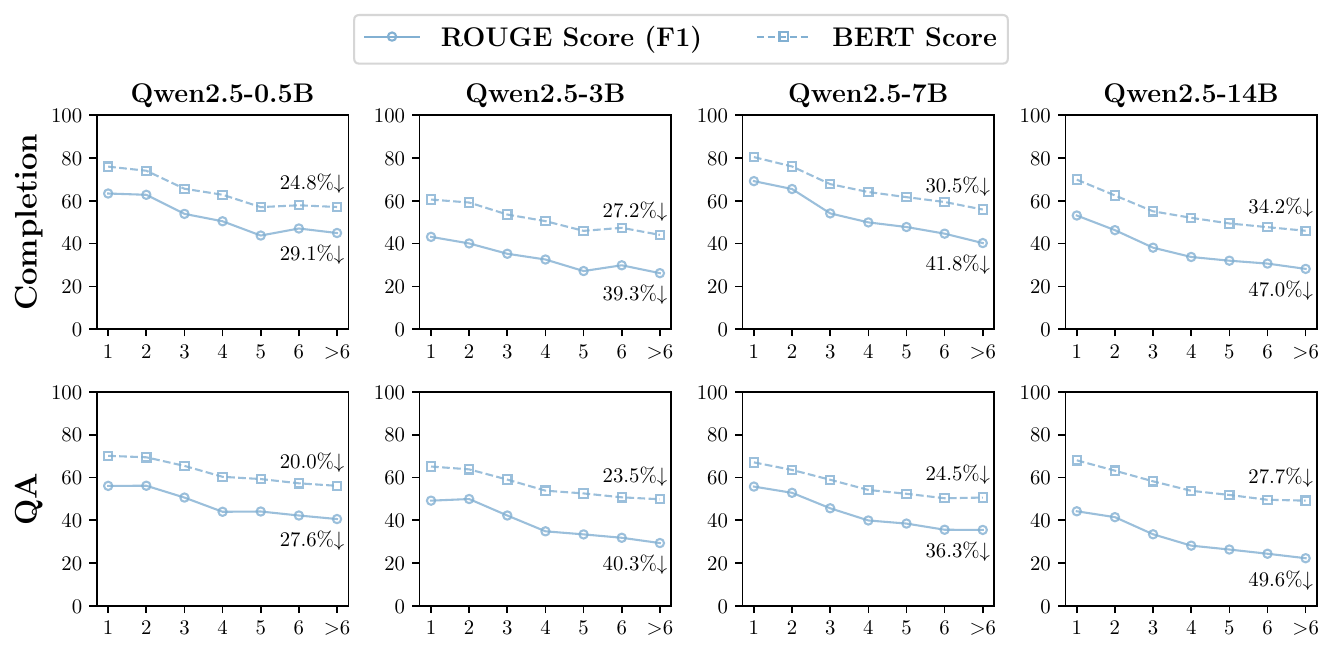}
  \vspace{-0.7em}
  \caption{\conr across different model scales. Results indicate that as model size increases, the \conr issue becomes more severe.}
  \vspace{-1.5em}
  \label{fig:model_scale}
\end{figure*}
To further validate the impact of model scale on \conr, we conducted experiments on instruction-tuned Qwen2.5 models of four different scales: 0.5B, 3B, 7B, and 14B.
We observed the model's performance in answering questions based on knowledge located at different positions using the FT method.
As shown in Figure \ref{fig:model_scale}, across all model scales, the accuracy of the model's answers significantly decreases as the position of testing questions shifts later, and the degree of decrease is more pronounced in larger models.
This indicates that the \conr issue is prevalent across models of different scales, and as the model size increases, the problem of \conr also intensifies, further highlighting the ubiquity and severity of the \conr problem.

\subsection{Impact of Training Steps}\label{appd:training_steps}
\begin{wrapfigure}{r}{0.4\linewidth}
  \vspace{-2em}
  \includegraphics[width=\linewidth]{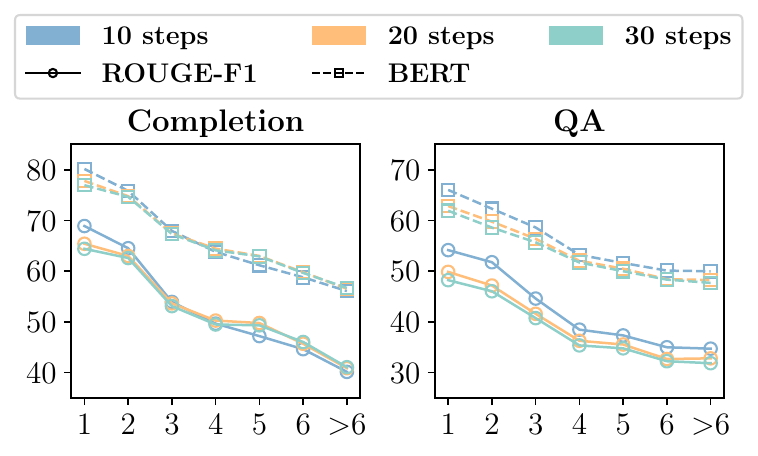}
  \vspace{-2em}
  \caption{Impact of training steps on \conr.}
  \label{fig:steps}
\end{wrapfigure}
To further validate the impact of training steps on the model's \conr, we conducted comparative experiments on the Llama3-8B-Instruct model with training steps set to 10, 20, and 30.
As shown in Figure \ref{fig:steps}, increasing the number of training steps does not reduce the model's reliance on context, as the accuracy of answers still significantly decreases as the position of knowledge moves further back.
Moreover, for QA-formatted testing cases, increasing the number of training steps can even severely affect the accuracy.
This may be because excessive training steps negatively impact the model's general capabilities, leading to some previously memorized knowledge being incorrectly outputted.
This experiment indicates that the \conr phenomenon doesn't arise from insufficient learning, but is more likely due to the inherent limitations of the NTP paradigm.

\section{Additional Experiments}
\subsection{Performance Comparison on Pretrained Models}
\begin{table}[t]
\setlength\tabcolsep{3.75pt}
\caption{Performance comparison of all methods on AKEW and UnKEBench using pretrained models, evaluated in terms of BERT Score (\%) and Precision, Recall, F1 for ROUGE-L (\%). The best results are in \textbf{bold}, and the second best are \underline{underlined}.}
\begin{center}
\begin{small}
\resizebox{0.9\linewidth}{!}{\begin{tabular}{cl|cccc|cccc|cccc}
\toprule
\multirow{2}{*}{\textbf{Model}} & \multirow{2}{*}{\textbf{Method}} & \multicolumn{4}{c}{\textbf{AKEW-Com.}} 
& \multicolumn{4}{c}{\textbf{AKEW-QA}} & \multicolumn{4}{c}{\textbf{UnKEBench}} \\
& & Prec. & Rec. & F1 & BERT & Prec. & Rec. & F1 & BERT & Prec. & Rec. & F1 & BERT \\
\midrule

\multirow{8}{*}{\rotatebox{90}{Llama3}}
& BASE
    & 25.80 & 20.95 & 21.83 & 41.68
    & 37.44 & 40.74 & 37.07 & 49.78
    & 23.36 & 24.64 & 21.73 & 46.53 \\

& AnyEdit
    & 25.97 & 21.35 & 22.04 & 42.05
    & 37.84 & 42.59 & 37.99 & 51.29
    & 15.86 & 30.13 & 18.46 & 37.46 \\

& UnKE
    & 26.06 & 21.44 & 22.19 & 41.79
    & 35.69 & 44.36 & 37.26 & 52.16
    & 19.71 & 26.93 & 20.48 & 43.47 \\

& AlphaEdit-D
	& 36.59 & 38.44 & 33.88 & 52.17
	& 38.82 & 48.62 & 39.31 & 53.04
	& 34.66 & 38.12 & 33.59 & 57.70 \\

& FT
    & \underline{58.79} & \underline{70.02} & \underline{59.35} & \underline{70.53}
    & \underline{46.51} & \underline{67.02} & \underline{50.18} & \underline{61.34}
    & \underline{50.66} & \underline{72.20} & \underline{54.55} & \underline{70.68} \\

& LoRA
    & 38.09 & 69.59 & 41.63 & 58.35
    & 34.25 & 59.78 & 39.20 & 55.44
    & 34.73 & 67.55 & 40.77 & 58.21 \\

& AdaLoRA
    & 40.97 & 63.29 & 42.79 & 59.45
    & 40.54 & 63.39 & 44.91 & 59.00
    & 38.19 & 67.08 & 43.45 & 61.59 \\

\rowcolor{gray!20} & \themodel
    & \textbf{61.20} & \textbf{76.51} & \textbf{63.33} & \textbf{73.59}
    & \textbf{47.80} & \textbf{69.69} & \textbf{51.92} & \textbf{61.73}
    & \textbf{52.17} & \textbf{74.83} & \textbf{56.63} & \textbf{72.32} \\

\rowcolor{gray!20} & $\Delta Improve$ & 4.1\% & 9.3\% & 6.7\% & 4.3\% & 2.8\% & 4.0\% & 3.5\% & 0.6\% & 3.0\% & 3.6\% & 3.8\% & 2.3\% \\

\midrule

\multirow{8}{*}{\rotatebox{90}{Qwen2.5}}
& BASE
    & 23.11 & 19.50 & 19.99 & 39.66
    & 37.73 & 41.65 & 37.71 & 51.06
    & 18.21 & 24.61 & 19.28 & 43.74 \\

& AnyEdit
    & 23.37 & 19.94 & 20.30 & 39.99
    & 38.24 & 43.73 & 38.84 & 52.64
    & 18.14 & 28.77 & 20.10 & 42.33 \\

& UnKE
    & 23.65 & 20.12 & 20.55 & 40.14
    & 37.76 & 45.79 & 39.17 & 53.76
    & 18.21 & 27.37 & 19.90 & 42.66 \\

& AlphaEdit-D
	& 40.35 & 40.40 & 37.06 & 56.69
	& 42.47 & 53.68 & 43.35 & 57.95
	& 39.49 & 46.88 & 40.31 & 61.32 \\

& FT
    & \underline{51.94} & \underline{73.96} & \underline{53.92} & \underline{67.07}
    & \underline{45.22} & \underline{71.59} & \underline{50.07} & \underline{62.46}
    & 37.41 & \underline{75.70} & 44.14 & 61.55 \\

& LoRA
    & 33.14 & 67.07 & 35.92 & 53.82
    & 35.54 & 63.17 & 39.63 & 56.23
    & 33.12 & 70.25 & 38.67 & 56.53 \\

& AdaLoRA
    & 45.50 & 59.58 & 45.60 & 61.59
    & 43.91 & 64.00 & 47.37 & 61.19
    & \underline{40.05} & 68.44 & \underline{45.61} & \underline{63.44} \\

\rowcolor{gray!20} & \themodel
    & \textbf{63.44} & \textbf{74.58} & \textbf{64.11} & \textbf{73.74}
    & \textbf{51.31} & \textbf{72.60} & \textbf{55.44} & \textbf{65.28}
    & \textbf{46.98} & \textbf{76.21} & \textbf{53.72} & \textbf{69.12} \\

\rowcolor{gray!20} & $\Delta Improve$ & 22.1\% & 0.8\% & 18.9\% & 9.9\% & 13.5\% & 1.4\% & 10.7\% & 4.5\% & 17.3\% & 0.7\% & 17.8\% & 9.0\% \\

\bottomrule
\end{tabular}}
\label{table:main_base}
\vspace{-1.5em}
\end{small}
\end{center}
\end{table}

Table \ref{table:main_base} presents the performance comparison of all methods on the AKEW and UnKEBench datasets using the pretrained Llama3-8B and Qwen2.5-7B models.
The experimental results demonstrate that \themodel outperforms all other methods on pretrained models, confirming its model-agnostic nature.
These results prove that aligning the distribution of knowledge across different context lengths during the editing process can effectively mitigate the \conr issue, thereby enhancing the effectiveness of unstructured knowledge editing.

\subsection{Multi-hop Editing Analysis}\label{appd:multi_hop}
\begin{wraptable}{r}{0.45\linewidth}
\setlength\tabcolsep{6.75pt}
\vspace{-1.5em}
\caption{Additional experimental results on \textsc{MQuAKE} dataset.}
\vspace{-1em}
\begin{center}
\begin{small}
\resizebox{\linewidth}{!}{\begin{tabular}{l|cc}
\toprule
\textbf{Method} & \textbf{Prob. (\%)} & \textbf{Match (\%)} \\
\midrule

FT (base) & 71.66 & 5.63 \\
FT (first layer) & 47.02 & 0.24 \\
FT (optimize target) & 71.27 & 0.43 \\

\bottomrule
\end{tabular}}
\label{table:mquake_analysis}
\end{small}
\end{center}
\vspace{-1.5em}
\end{wraptable}

To investigate the reasons behind FT's strong performance in multi-hop tasks, we conducted experiments with several variants of FT and evaluated them using additional metrics, as shown in Table \ref{table:mquake_analysis}.
Specifically, we designed two variants: one that fine-tunes the parameters of the first layer of the model instead of the 22nd layer as in original setting; and another that optimizes using only the gradients generated from the target, rather than utilizing gradients from the entire text.
In terms of metrics, in addition to comparing the probabilities of the new and old answers (Prob.), we also evaluate whether the outputs are completely identical to the new answers (Match).
The results indicate that fine-tuning the parameters of the first layer leads to a significant drop in performance, and using only the target's gradient for optimization also leads to decreased performance in probability difference and answer match.
This suggests that FT's superiority over traditional editing methods in multi-hop tasks stems from its use of gradient information from the complete text and its editing at deeper layers, thereby capturing the complex dependencies among multi-hop knowledge.

\subsection{Sensitivity Analysis}

\begin{wrapfigure}{r}{0.4\linewidth}
  \vspace{-1em}
  \includegraphics[width=\linewidth]{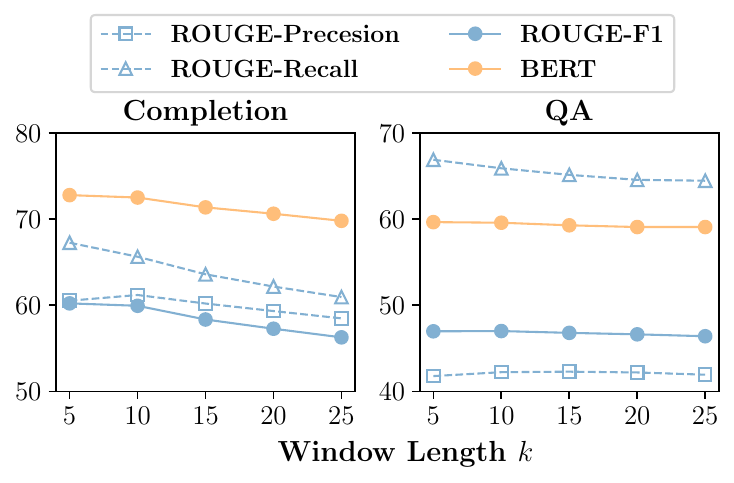}
  \vspace{-1.7em}
  \caption{Performance of \themodel with different window length $k$ on AKEW dataset.}
  \label{fig:sensitivity}
\end{wrapfigure}

To investigate \themodel's sensitivity to key hyperparameters, we evaluated the impact of the sliding window length $k$ on performance using the AKEW dataset. The parameter $k$ defines the length of the local context for our context alignment loss, a crucial component for effective alignment. We experimented with $k \in \{5, 10, 15, 20, 25\}$, with results presented in Figure \ref{fig:sensitivity}.
We observed that for both completion and QA tasks, the ROUGE-Recall score tended to decrease as $k$ increased. We hypothesize that as the local context window expands, it captures more information, reducing the distributional divergence between the local and global contexts. This, in turn, diminishes the effectiveness of the alignment loss.
Furthermore, we noted that performance on completion tasks was more sensitive to changes in $k$ than on QA tasks. This is likely because the completion task format more closely resembles the editing text, leading to a stronger dependence on the preceding text.

\subsection{Case Study}
In this section, we demonstrate the efficacy of \themodel to unstructured knowledge editing using the Llama3-8B-Instruct model. We present several examples from the AKEW and UnKEBench datasets, comparing the outputs with baseline methods including AnyEdit, AlphaEdit, FT, and LoRA. These case studies, illustrated in Figure \ref{fig:case_study_akew} and Figure \ref{fig:case_study_unke}, highlight the models' ability to answer questions based on the edited unstructured knowledge.

\textbf{AKEW Case Study.} Our first case study, from the AKEW dataset, involves the completion task. When presented with two questions targeting the edited knowledge, all baseline methods failed to produce the correct answer. In contrast, \themodel successfully answered the questions, providing a response identical to the ground truth. This result underscores the effectiveness of \themodel in accurately editing unstructured knowledge and responding to related queries.

\textbf{UnKEBench Case Study.} The second case study, utilizing the UnKEBench dataset, presents a text with multiple subquestions. Here, we focus on the outputs of the FT, LoRA and \themodel methods, which are based on the next-token prediction paradigm. We observed that the FT method tended to generate extra phrases, such as "According to his birth certificate," before providing the answer. This issue was even more prominent with LoRA, which, after generating the correct answer "Mart and Liisa Kull," continued to produce irrelevant content. We attribute this to the reliance on the preceding context during the editing process, which prevents these methods from focusing on the target knowledge. As a result, they appear to generate additional content to reason through the answer. In contrast, \themodel accurately answered all subquestions with responses that closely matched the ground truth. This further demonstrates that by mitigating the issue of \conr, \themodel can effectively edit unstructured knowledge and provide precise answers to related questions.

\definecolor{dgreen}{RGB}{0, 176, 80}
\begin{figure*}
  \centering
  \includegraphics[width=0.75\linewidth]{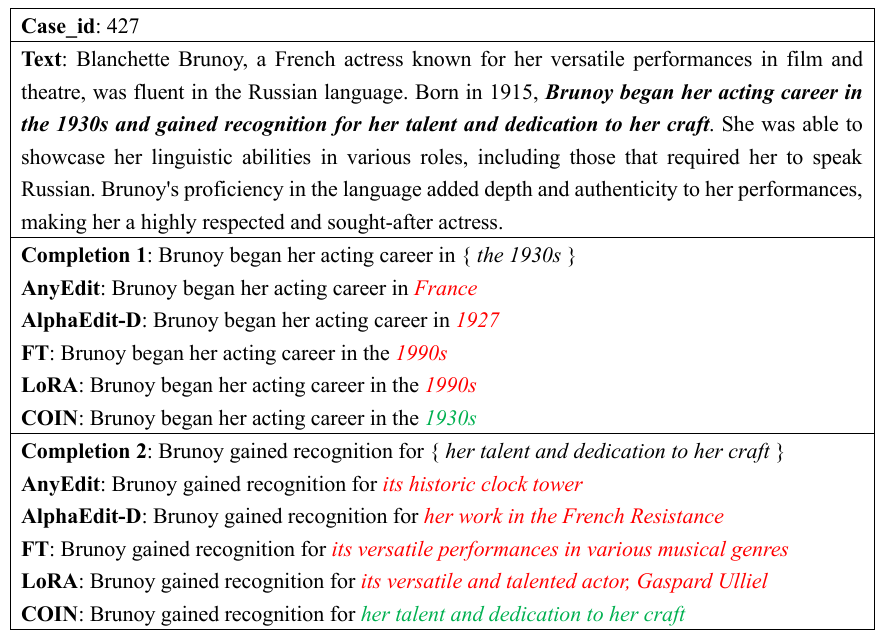}
  \caption{Comparison of outputs from Llama3-8B-Instruct on an example from the AKEW dataset, showcasing the performance of AnyEdit, AlphaEdit, FT, LoRA, and \themodel. \textcolor{dgreen}{\textbf{Green}} text indicates a correct answer, while \textcolor{red}{\textbf{red}} text highlights an incorrect one.}
  \label{fig:case_study_akew}
\end{figure*}

\begin{figure*}[t!]
  \centering
  \includegraphics[width=0.75\linewidth]{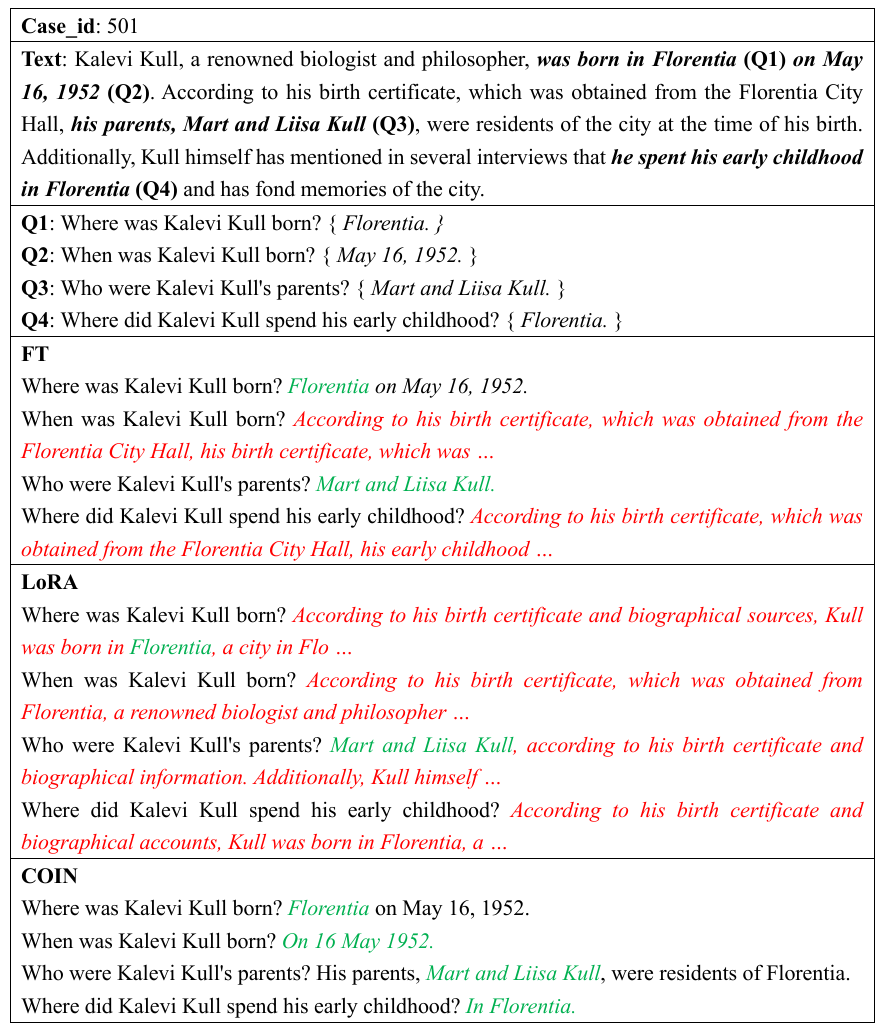}
  \caption{Comparison of outputs from Llama3-8B-Instruct on an example from the UnKEBench dataset, showcasing the performance of FT, LoRA, and \themodel. \textcolor{dgreen}{\textbf{Green}} text indicates correct answers, while \textcolor{red}{\textbf{red}} text highlights incorrect or irrelevant content.}
  \label{fig:case_study_unke}
\end{figure*}

\end{document}